\newcommand{\su}{\succ}
\renewcommand{\vec}[1]{\mathbf{#1}}
\DeclareMathOperator{\dist}{dist}
\renewcommand{\S}{\mathcal{S}}
\DeclareMathOperator{\cost}{SC}
\DeclareMathOperator{\med}{med}
\newcommand\ddfrac[2]{\frac{\displaystyle #1}{\displaystyle #2}}
\newtheorem{thm}{Theorem}
\newtheorem{lem}[thm]{Lemma}
\title{Randomized Social Choice Functions\\ Under Metric Preferences}
\author{Elliot Anshelevich \\ John Postl \\ Rensselaer Polytechnic Institute\\ 110 8th Street, Troy, NY 12180\\ eanshel@cs.rpi.edu, postlj@rpi.edu}
\tikzset{main node/.style={circle,fill=blue!20,draw,minimum size=1cm,inner sep=0pt},
            }
\begin{document}

\maketitle

\begin{abstract}
We determine the quality of randomized social choice mechanisms in a setting in which the agents have \emph{metric preferences}: every agent has a cost for each alternative, and these costs form a metric. We assume that these costs are unknown to the mechanisms (and possibly even to the agents themselves), which means we cannot simply select the optimal alternative, i.e. the alternative that minimizes the total agent cost (or median agent cost). However, we do assume that the agents know their ordinal preferences that are induced by the metric space. We examine randomized social choice functions that require only this ordinal information and select an alternative that is good in expectation with respect to the costs from the metric. To quantify how good a randomized social choice function is, we bound the \emph{distortion}, which is the worst-case ratio between expected cost of the alternative selected and the cost of the optimal alternative. We provide new distortion bounds for a variety of randomized mechanisms, for both general metrics and for important special cases. Our results show a sizable improvement in distortion over deterministic mechanisms.
%We provide distortion bounds of strictly less than 3 for mechanisms, such as randomized dictatorship and proportional to squares, for general metrics (and some special subcases), which is an improvement over deterministic mechanisms from \cite{anshelevich2014approximating} which provably could not have distortion better than 3.
\end{abstract}

\section{Introduction}
Social choice, and especially the recent field of {\em computational social choice}, is a large and exciting subfield of artificial intelligence research (see for example \cite{chevaleyre2007short,moulin2016handbook} for some surveys and connections with other areas of AI). The goal of social choice theory is usually to aggregate the preferences of many agents with conflicting interests, and produce an outcome that is suitable to the whole rather than to any particular agent. This is accomplished via a \emph{social choice mechanism} which maps the preferences of the agents, usually represented as total orders over the set of alternatives, to a single winning alternative. There is no agreed upon ``best'' social choice mechanism; it is not obvious how one can even make this determination. Because of this, much of social choice literature is concerned with defining normative or axiomatic criteria, so that a social choice mechanism is ``good'' if it satisfies many useful criteria.

Another method of determining the quality of a social choice function is the \emph{utilitarian} approach, which is often used in welfare economics and algorithmic mechanism design. Here agents have an associated utility (or cost, as in this paper) with each alternative that is a measure of how desirable (or undesirable) an alternative is to an agent. We can define the quality of an alternative to be a function of these agent utilities, for example as the sum of all agent utilities for a particular alternative. Other objective functions such as the median or max utility of the agents for a fixed alternative can be used as well. The utilitarian approach has received a lot of attention recently in the social choice literature \cite{caragiannis2011voting,filos2014social,harsanyi1976cardinal,feldman2016}, see especially \cite{boutilier2015optimal} for a thorough discussion of this approach, its strengths, and its weaknesses.

%In this case, the optimal alternative can easily be found since it is merely the alternative that maximizes a global utility function (or minimizes a cost function). Thus, designing a social choice mechanism is trivial in this setting.

A frequent criticism of the utilitarian approach is that it is unreasonable to assume that the mechanism, or even the agents themselves, know what their utilities are. Indeed, it can be difficult for an agent to quantify the desirability of an alternative into a single number, but there are arguments in favor of cardinal utilities \cite{boutilier2015optimal,harsanyi1976cardinal}. Even if the agents were capable of doing this for each alternative, it could be difficult for us to elicit these utilities in order to compute the optimal alternative. It is much more reasonable, and much more common, to assume that the agents know the preference rankings induced by their utilities over the alternatives. That is, it might be difficult for an agent to express exactly how she feels about alternatives $X$ and $Y$, but she should know if she prefers $X$ to $Y$. Because of this, work such as \cite{procaccia2006distortion,boutilier2015optimal,caragiannis2011voting,anshelevich2014approximating,feldman2016} considers how well social choice mechanisms can perform when they only have access to {\em ordinal preferences} of the agents, i.e., their rankings over the alternatives, instead of the true underlying (possibly latent) utilities. The \emph{distortion} of a social choice function is defined here as the worst-case ratio of the cost of the alternative selected by the social choice function and the cost of the truly optimal alternative.

Our goal in this work is to design social choice mechanisms that minimize the worst-case distortion for the sum and median objective functions when the agents have \emph{metric preferences} \cite{anshelevich2014approximating}. That is, we assume that the costs of agents over alternatives form an arbitrary \emph{metric space} and that their preferences are induced by this metric space. Assuming such metric or spatial preferences is common \cite{enelow1984spatial}, has a natural interpretation of agents liking candidates/alternatives which are most similar to them, such as in facility location literature \cite{campos2008multiple,escoffier2011strategy,feldman2016}, and our setting is sufficiently general that it does not impose any restrictions on the set of allowable preference profiles. Anshelevich et al. \cite{anshelevich2014approximating} provide distortion bounds for this setting using well-known deterministic mechanisms such as plurality, Copeland, and ranked pairs. We improve on these results by providing distortion guarantees for \emph{randomized social choice functions}, which output a probability distribution over the set of alternatives rather than a single winning alternative.
%While this type of solution may not amenable to certain settings, such as elections, it is acceptable in situations where fairness can be impossible, such as one-sided matchings \cite{aziz2014generalization}.
%[Such mechanisms can also be interpreted as ``power-sharing" mechanisms, in which the probability that alternative $A$ is selected is interpreted as the power that $A$ receives.]
We show that our randomized mechanisms perform better than any deterministic mechanism, and provide optimal randomized mechanisms for various settings.

We also examine the distortion of randomized mechanisms in important specialized settings. Many of our worst-case examples occur when many agents are indifferent between their top alternative and the optimal alternative. In many settings, however, agents are more \emph{decisive} about their top choice, and prefer it much more than any other alternative. We introduce a formal notion of \emph{decisiveness}, which is a measure of how strongly an agent feels about her top preference relative to her second choice. If an agent is very decisive, then she is very close to her top choice compared to her second choice in the metric space. In the extreme case, this means that the set of agents and alternatives is identical \cite{goel2012triadic}, as can occur for example when proposal writers rank all the other proposals being submitted, or when a committee must choose one of its members to lead it.
%In essence, decisiveness breaks ties between an agent's top choice and second choice by ensuring an agent does not feel too similarly about her first and second choice.
We demonstrate that when agents are decisive, the distortion greatly improves, and quantify the relation between decisiveness and the performance of social choice mechanisms. Finally, we consider other natural special cases, such as when preferences are 1-Euclidean and when alternatives are vertices of a simplex. 1-Euclidean preferences are already recognized as a well-studied and well-motivated special case \cite{elkind2014recognizing,procaccia2013approximate}. The setting in which alternatives form a simplex corresponds to the case in which alternatives share no similarities, i.e., when all alternatives are equally different from each other.

%This prevents a spoiler effect from happening, which helps decrease the distortion.

\subsection{Our Contributions}

In this paper, we bound the worst-case distortion of several randomized social choice functions in many different settings. Recall that the distortion is the worst-case ratio of the expected value of the alternative selected by the randomized mechanism and the optimal alternative. We use two different objective functions for the purpose of defining the quality of an alternative. The first is the sum objective, which defines the social cost of an alternative to be the sum of agent costs for that particular alternative. We also consider the median objective, which defines the quality of an alternative as the median agent's cost for that alternative.

We summarize our results in Table~\ref{table:results_summary}. Note that for the sum objective, these results are also given for $\alpha$-decisive metric spaces. A metric space is $\alpha$-decisive if for every agent, the cost of her first choice is less than $\alpha$ times the cost of her second choice, for some $\alpha \in [0,1]$. In other words, this provides a constraint on how indifferent an agent can be between her first and second choice. By definition, any agent cost function is 1-decisive. Considering $\alpha$-decisive metric spaces allows us to immediately give results for important subcases, such as $0$-decisive metric spaces in which every agent has distance 0 to her top alternative, i.e., every agent is also an alternative.

For the sum objective function, we begin by giving a lower bound of $1+\alpha$ for \emph{all} randomized mechanisms, which corresponds to a lower bound of 2 for general metric spaces. This is smaller than the lower bound of 3 for deterministic mechanisms from \cite{anshelevich2014approximating}. One of our first results is to show randomized dictatorship has worst-case distortion strictly better than 3, which is better than any possible deterministic mechanism. Furthermore, we show that a generalization of the ``proportional to squares" mechanism is the optimal randomized mechanism when there are two alternatives, i.e., it has a distortion of $1+\alpha$.

We also examine how well randomized mechanisms perform in important subcases. We consider the well-known case in which all agents and alternatives are points on a line with the Euclidean metric, known as 1-Euclidean preferences \cite{elkind2014recognizing}. We give an algorithm, which heavily relies on proportional to squares, to achieve the optimal distortion bound of $1+\alpha$ for any number of alternatives. We also consider a case first briefly described in \cite{anshelevich2014approximating}, known as the $(m-1)$-simplex setting, in which the alternatives are vertices of a simplex and the agents lie in the simplex. This corresponds to alternatives sharing no similarities. We are able to show that proportional to squares achieves worst-case distortion of $\frac{1}{2}\left(1 + \alpha + \sqrt{2}\sqrt{\alpha^2+1} \right)$, which is fairly close to the optimal bound of $1+\alpha$. For details, see Section \ref{sec:simplex}. %Plurality (which is deterministic) can be used in this setting instead to achieve a distortion bound of $1 + 2\alpha$, which is closer to optimal for sufficiently small $\alpha$.

\begin{table}[]
\centering

\begin{tabular}{|l|l|l|l|}
\hline
            & \multicolumn{2}{c|}{Sum}                                                                                                 & \multicolumn{1}{c|}{Median}                                        \\ \hline
            & General                   & $\alpha$-Decisive                                                                          & General                                                            \\ \hline
General     & {\em RD:} $3 - \frac{2}{n}$ & $2 + \alpha - \frac{2}{n}$                                                             & \begin{tabular}[c]{@{}l@{}}{\em Uncovered Set Min-Cover:} 4\end{tabular} \\ \hline
1-Euclidean & {\em 1-D Prop. to Squares:} 2   & $1+\alpha$                                                                                 & {\em Condorcet:} 3                                                       \\ \hline
Simplex     & {\em Prop. to Squares:} 2       & $\frac{1}{2}\left(1 + \alpha + \sqrt{2}\sqrt{\alpha^2+1} \right)$ & {\em Majority consistent:} 2                                             \\ \hline
Lower Bounds& 2              &  $1+\alpha$    & {\em 1-Euclidean:} 3 \\ \hline
\end{tabular}
\caption{The worst-case distortion of our social choice mechanisms are given for both the sum and median objective functions in various settings.
%All of these bounds are provably tight except for the setting marked with $(*)$.
In the general setting, all randomized mechanisms have a lower bound of 2 and 3 for the sum and median objective functions, respectively. For the $\alpha$-decisive setting with the sum objective function, no randomized mechanism can have distortion better than $1+\alpha$.}
\label{table:results_summary}
\end{table}

%\begin{table}[]
%\centering
%\caption{Best known mechanisms and their distortion in various settings for the median and sum objective functions.}
%\label{tbl:distortion}
%\begin{tabular}{|l|l|l|l|l|}
%\hline
%            & \multicolumn{2}{c|}{Sum}                                                                           & \multicolumn{2}{c|}{Median}                                                                                                                         \\ \hline
%            & General                   & $\alpha$-Decisive                                                    & General                                                            & $\alpha$-Decisive                                                            \\ \hline
%General     & RD: $3 - \frac{2|W^*|}{n}$ & $2 + \alpha - \frac{2|W^*|}{n}$                                       & \begin{tabular}[c]{@{}l@{}}Copeland: 5\\ Condorcet: 3\end{tabular} & \begin{tabular}[c]{@{}l@{}}General: ???\\ Condorcet: $2+\alpha$\end{tabular} \\ \hline
%1-Euclidean & 1-D Prop. to Squares: 2   & $1+\alpha$                                                           & Condorcet: 3                                                       & $2+\alpha$                                                                   \\ \hline
%Simplex     & Prop. to Squares: 2       & $\frac{1}{2}\left(1 + \alpha + \sqrt{2}\sqrt{\alpha^2+1} \right)^*$ & Majority consistent: 2                                             & $1+\alpha$                                                                   \\ \hline
%\end{tabular}
%\end{table}

Our other major contribution is defining a new randomized mechanism for the median objective which achieves a distortion of 4 in arbitrary metric spaces (we call this mechanism {\em Uncovered Set Min-Cover}). This requires forming a very specific distribution over all alternatives in the uncovered set, and then showing that this distribution ensures that no alternative ``covers" more than half of the total probability of all alternatives. We do this by taking advantage of LP-duality combined with properties of the uncovered set. We believe that this mechanism is interesting on its own, as it is likely to have other nice properties in addition to low median distortion.

\subsection{Related Work}

Embedding the unknown cardinal preferences of agents into an ordinal space and measuring the \emph{distortion} of social choice functions that operate on these ordinal preferences was first done in \cite{procaccia2006distortion}. Additional papers \cite{boutilier2015optimal,caragiannis2011voting,oren2014online,anshelevich2014approximating,feldman2016} have since studied distortion and other related concepts of many different mechanisms with various assumptions about the utilities/costs of the agents. In this context, Anshelevich et al. introduced the notion of \emph{metric preferences} in \cite{anshelevich2014approximating}, which assumes the costs of the agents and alternatives form a metric. For this setting, Anshelevich et al. \cite{anshelevich2014approximating} proved that while various scoring rules such as Plurality and Borda can have very large distortion, the Copeland social choice function always has distortion at most 5, and in fact no deterministic social choice function can have worst-case distortion better than 3. For the median distortion objective, they proved that Copeland still achieves distortion of 5, and in fact no deterministic function can have worst-case distortion better than 5; thus in terms of worst-case distortion Copeland is optimal for this objective. We further extend their work by considering randomized mechanisms instead of deterministic ones and exploring special types of metrics. The randomized mechanisms we provide have smaller expected distortion than the deterministic mechanisms from \cite{anshelevich2014approximating}, and in fact sometimes perform better than any deterministic mechanism possibly could.

Using mechanisms to select alternatives from a metric space when the true locations of agents is unknown is also reminiscent of facility location games \cite{campos2008multiple,escoffier2011strategy}. However, we select only a single winning alternative in our setting, while in these papers, they select multiple facilities.

Pivato \cite{pivato2016asymptotic} demonstrates that social choice functions like Borda and approval voting are able to maximize the utility with high probability, when the agents satisfy certain properties. Rivest and Shen \cite{rivest2010optimal} use a game-theoretic model to compare two voting systems and develop a randomized mechanism that is always preferred to any other voting system.

Assuming that the preferences of agents are induced by a metric is a type of spatial preference \cite{enelow1984spatial,merrill1999unified}. There are many other notions of spatial preferences that are prevalent in social choice, such as 1-Euclidean preferences \cite{elkind2014recognizing,procaccia2013approximate}, single-peaked preferences \cite{sui2013multi}, and single-crossing \cite{gans1996majority}. We consider 1-Euclidean preferences as an important special case of the metric preferences we study in this paper.

Randomized social choice was first studied in \cite{Zeckhauser01111969,FISHBURN1972189,10.2307/2296588}. A similar setting was considered by Fishburn and Gehrlein \cite{10.2307/1914118}, in which agents are uncertain about their preferences and express their preferences using probability distributions. We consider several randomized mechanisms, such as randomized dictatorship \cite{chatterji2014random}; other randomized voting mechanisms have been used in, e.g., \cite{procaccia2010,Brandl2016}. The use of randomized mechanisms is seen very frequently in literature concerning one-sided matchings. Random serial dictatorship and probabilistic serial are perhaps the most well-studied randomized mechanisms, and there is a significant amount of literature on them (e.g. \cite{Bogomolnaia2001295,aziz2013tradeoff,aziz2014generalization,chakrabarty2014welfare,christodoulou2016welfare,filos2014social}).  In particular, the results of \cite{filos2014social,greedy} are analogous to finding the distortion of matching mechanisms.

Related to the notion of randomized social choice functions are proportional representation voting systems in which there are multiple winners \cite{monroe1995fully,nandeibam2003distribution,skowron2015fully,Chamberlin}. Selecting multiple winners is conceptually similar to having a probability distribution over a set of alternatives. Skowron et al. \cite{skowron2015fully} consider approximation algorithms to multiwinner rules that seek to maximize global objective functions, but are NP-hard to solve.

Finally, independently from us, Feldman et al. have also recently considered the distortion of randomized social choice functions in \cite{feldman2016}. While they mostly focus on truthful mechanisms (i.e., the "strategic" setting), there is some intersection between our results. Specifically, Feldman et al. also give a bound of 3 (and a lower bound of 2) for arbitrary metric spaces in the sum objective, and also provides a mechanism with distortion 2 for the 1-Euclidean case. The latter mechanism is quite different from ours, however: ours seems to be somewhat simpler, but the mechanism from \cite{feldman2016} has the advantage of being truthful. However, Feldman et al. do not consider either $\alpha$-decisive voters or the median objective: showing better performance for decisive voters and designing better mechanisms for the median objective are two of our major contributions.

\section{Preliminaries}\label{sec:prelim}

\paragraph{Social Choice with Ordinal Preferences.} Let $N = \{1, 2, \ldots, n \}$ be the set of agents, and let $M = \{A_{1}, A_{2}, \ldots, A_{m} \}$ be the set of alternatives. Let $\S$ be the set of all total orders on the set of alternatives $M$. We will typically use $i, j$ to refer to agents and $W, X, Y, Z$ to refer to alternatives. Every agent $i \in N$ has a \emph{preference ranking }$\sigma_{i} \in \S$; by $X \succ_{i} Y$ we will mean that $X$ is preferred over $Y$ in ranking $\sigma_i$. We call the vector $\sigma = (\sigma_{1}, \ldots, \sigma_{n}) \in \S^{n}$ a \emph{preference profile}. We say that an alternative $X$ \emph{pairwise defeats} $Y$ if $|\{i \in N: X \succ_{i} Y \} | > \frac{n}{2}$. Furthermore, we use the following notation to describe sets of agents with particular preferences: $XY = \{i \in N: X \su_{i} Y \}$ and $X^* = \{i \in N: X \su_{i} Y \text{ for all } Y \neq X\}$.

Once we are given a preference profile, we want to aggregate the preferences of the agents and select a single alternative as the winner or find a probability distribution over the alternatives and pick a single winner according to that distribution. A \emph{deterministic social choice function} $f: \S^n \to M$ is a mapping from the set of preference profiles to the set of alternatives. A \emph{randomized social choice function} $f: \S^n \to \Delta(M)$ is a mapping from the set of preference profiles to the space of all probability distributions over the alternatives $\Delta(M)$. Some well-known social choice functions which we consider in this paper are as follows.
\begin{itemize}
\item{\textbf{Randomized dictatorship/plurality:}} The winning alternative is selected according to the following probability distribution: for all alternatives $Y \in M$, $$p(Y) = \frac{|Y^*|}{n}.$$
\item{\textbf{Proportional to squares.}} The winning alternative is selected according to the following probability distribution: for all alternatives $Y \in M$, $$p(Y) = \frac{|Y^*|^2}{\sum_{Z \in M} |Z^*|^2}.$$
\item{\textbf{Condorcet method:}} A weak Condorcet winner is defined as the alternative that either pairwise defeats or pairwise ties every other alternative. There can be multiple weak Condorcet winners. A Condorcet winner must pairwise defeat every other alternative; there can be at most one Condorcet winner. Neither weak Condorcet winners nor Condorcet winners are guaranteed to exist. A Condorcet method is any social choice function that is guaranteed to select a Condorcet winner, if it exists.
\item{\textbf{Majority method:}} A majority winner is an alternative that is ranked as the first preference of strictly more than $\frac{n}{2}$ agents. A majority method is any method that will select the majority winner, if it exists.
%\item \textbf{Copeland:} The score of an alternative $X$ is $|\{Y \in M : X \text{ pairwise defeats } Y \}|$. The alternative with the highest score, i.e., the alternative with the largest number of pairwise victories, is the winner.
\end{itemize}

\paragraph{Cardinal Metric Costs.} In our work we take the utilitarian view, and study the case when the ordinal preferences $\sigma$ are in fact a result of the underlying cardinal agent costs. Formally, we assume that there exists an arbitrary metric $d: (N \cup M)^2 \to \mathbb{R}_{\geq 0}$ on the set of agents and alternatives (or more generally a {\em pseudo-metric}, since we allow distinct agents and alternatives to be identical and have distance 0). Here $d(i,X)$ is the cost incurred by agent $i$ when alternative $X$ is selected as the winner; these costs can be arbitrary but are assumed to obey the triangle inequality. The metric costs $d$ naturally give rise to a preference profile. Formally, we say that $\sigma$ is \emph{consistent} with $d$ if $\forall i \in N, \forall X, Y \in M$, if $d(i, X) < d(i, Y)$, then $X \succ_{i} Y$. In other words, if the cost of $X$ is less than the cost of $Y$ for an agent, then the agent should prefer $X$ over $Y$. When $d(i,X)=d(i,Y)$, then both $X \succ_{i} Y$ and $Y \succ_{i} X$ are considered consistent with the costs of $i$. Let $\rho(d)$ denote the set of preference profiles consistent with $d$ ($\rho(d)$ may include several preference profiles if the agent costs have ties). Similarly, we define $\rho^{-1}(\sigma)$ to be the set of metrics such that $\sigma \in \rho(d)$.

% We might need this?????

%When making additional assumptions on how the preference rankings of the agents are generated, the set of possible preference profiles may become restricted. For example, if we restrict agents to one-dimensional single-peaked preferences, or to single-crossing preferences, then preference profiles with the Condorcet paradox can no longer be realized \cite{black1948rationale,gans1996majority,saporiti2009strategy}. However, having arbitrary metric costs in our model does not restrict the set of possible profiles $\sigma$ in any way: metrics are general enough that any preference profile in $\S^n$ can be induced.

% Added cost definitions for randomized mechanisms and changed notation.

\paragraph{Social Cost and Distortion.} We measure the quality of each alternative using the costs incurred by all the agents when this alternative is chosen. We use two different notions of social cost. First, we study the sum objective function, which is defined as $\cost(X, d) =  \sum_{i \in N} d(i, X)$ for an alternative $X$. We also study the median objective function, $\med(X, d) =  \med_{i \in N}(d(i, X))$.  Since we have defined the cost of alternatives, we can now give the cost of an outcome of a deterministic social choice function $f$ as $\cost(f(\sigma),d)$ or $\med(f(\sigma), d)$. For randomized functions, we define the cost of an outcome, which is a probability distribution over alternatives, as follows: $\cost(f(\sigma),d) = \mathbb{E}_{X\sim f(\sigma)}\left[\cost(X,d) \right] = \sum_{X \in M} p(X) \cost(X,d)$ and $\med(f(\sigma),d) = \mathbb{E}_{X\sim f(\sigma)}\left[\med(X,d) \right] = \sum_{X \in M} p(X) \med(X,d)$, where $p(X)$ is the probability of alternative $X$ being selected, according to $f(\sigma)$. When the metric $d$ is obvious from context, we will use $\cost(X)$ and $\med(X)$ as shorthand.

% None of this rewritten really, except some slight notation changes.

As described in the Introduction, we can view social choice mechanisms in our setting as attempting to find the optimal alternative (one that minimizes cost), but only having access to the ordinal preference profile $\sigma$, instead of the full underlying costs $d$. Since it is impossible to compute the optimal alternative using only ordinal preferences, we would like to determine how well the aforementioned social choice functions select alternatives based on their social costs, despite only being given the preference profiles. In particular, we would like to quantify how the social choice functions perform in the worst-case. To do this, we use the notion of \emph{distortion} from \cite{procaccia2006distortion,boutilier2015optimal}, defined as follows.
\begin{align*}
\dist_{\sum}(f, \sigma) &= \sup_{d \in \rho^{-1}(\sigma)} \frac{ \cost(f(\sigma), d)}{\min_{X \in M} \cost(X, d)} \\
\dist_{\med}(f, \sigma) &=  \sup_{d \in \rho^{-1}(\sigma)} \frac{ \med(f(\sigma), d)}{\min_{X \in M} \med(X, d)}.
\end{align*}

In other words, the distortion of a social choice mechanism $f$ on a profile $\sigma$ is the worst-case ratio between the social cost of $f(\sigma)$, and the social cost of the true optimum alternative. The worst-case is taken over all metrics $d$ which may have induced $\sigma$, since the social choice function does not and cannot know which of these metrics is the true one.

\paragraph{Examples.} To illustrate some of the behavior arising in our setting, and to build intuition, here we consider a simple example. Consider the setting in Figure~\ref{fig:sumLowerbound} with only two alternatives, $X$ and $Y$. The preferences are tied: $\frac{n}{2}$ agents prefer $X$ to $Y$, and $\frac{n}{2}$ prefer $Y$ to $X$. The ordinal social choice functions we consider do not know anything else; a deterministic function would be forced to choose a specific alternative (without loss of generality suppose it is $Y$), while randomized dictatorship would choose each alternative with probability $\frac{1}{2}$. The true, underlying costs could be as follows, however: $\frac{n}{2}$ agents have cost $0$ for $X$ and $2$ for $Y$ (these are located ``on top of" $X$), while $\frac{n}{2}$ agents have cost $1+\epsilon$ for $X$ and $1-\epsilon$ for $Y$, for some very small $\epsilon$ (these are located ``between $X$ and $Y$"). Then $X$ is the true optimum solution: the total social cost of $X$ is $(1+\epsilon)\frac{n}{2}$, while the social cost of $Y$ is $(3-\epsilon)\frac{n}{2}$. Thus, any deterministic function selecting $Y$ has (sum) distortion approaching 3 as $\epsilon\rightarrow 0$, while randomized dictatorship has expected distortion approaching $\frac{1}{2}\cdot1+\frac{1}{2}\cdot 3=2$ for this example.

\begin{figure}
\centering
\begin{tikzpicture}[scale=1,transform shape]
        \Vertex[x=0,y=0,LabelOut=true,L=$\frac{n}{2}$ agents,Lpos=270]{j}
  \Vertex[x=0,y=0,]{X}

  \Vertex[x=8,y=0]{Y}
    \Vertex[x=4,y=0,LabelOut=true,L=$\frac{n}{2}$ agents,Lpos=270]{i}

  \tikzstyle{LabelStyle}=[fill=white,sloped]

  \Edge[label=$(1+\epsilon)$](X)(i)
  \Edge[label=$(1-\epsilon)$](i)(Y)
 \tikzstyle{EdgeStyle}=[bend left]

  \Edge[label=$2$](X)(Y)
  \tikzstyle{EdgeStyle}=[bend right]
\end{tikzpicture}
\caption{There are $\frac{n}{2}$ agents located at $X$ who prefer $X$ and $\frac{n}{2}$ agents between $X$ and $Y$ who prefer $Y$. As $\epsilon \rightarrow 0$, the expected distortion of randomized dictatorship approaches 2.}
\label{fig:sumLowerbound}
\end{figure}
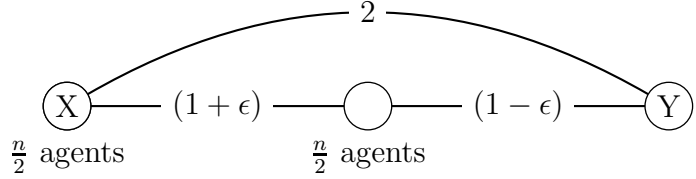

For the median objective, suppose instead that there is an odd number of voters, with $\lceil\frac{n}{2}\rceil$ preferring $Y$ and $\lfloor\frac{n}{2}\rfloor$ preferring $X$, as seen in Figure~\ref{fig:medianExample}. Any reasonable social choice function would select $Y$; randomized dictatorship would once again mix about equally between $X$ and $Y$. However, the true numerical costs can be as follows: $\lfloor\frac{n}{2}\rfloor$ have cost $0$ for $X$ and $2$ for $Y$, one agent has cost $1-\epsilon$ for $Y$ and $1+\epsilon$ for $X$, and $\lfloor\frac{n}{2}\rfloor$ have cost of $2$ for $Y$ and $4$ for $X$. The median agent cost for $X$ is approximately 1, while the median agent cost for $Y$ is 2. Thus, $X$ is the optimum solution, but random dictatorship only chooses it with probability about $\frac{1}{2}$. For more examples and lower bounds on possible distortion, see Theorems \ref{thm:randomizedLowerBound} and \ref{thm:median_random_lb}.

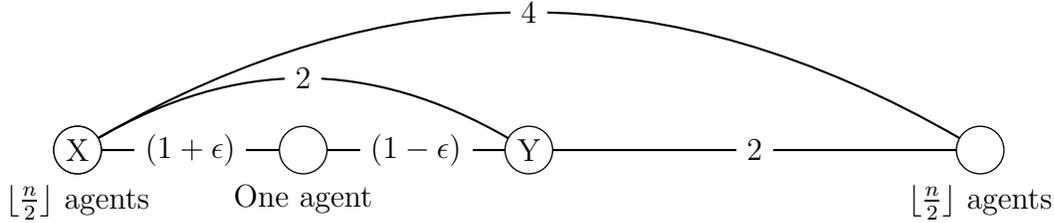
\begin{figure}
\begin{tikzpicture}[scale=1,transform shape]
  \Vertex[x=12,y=0,LabelOut=true,Lpos=270,L=$\lfloor\frac{n}{2}\rfloor$ agents]{k}
  \Vertex[x=0,y=0,LabelOut=true,L=$\lfloor\frac{n}{2}\rfloor$ agents,Lpos=270]{j}
  \Vertex[x=0,y=0,]{X}
  \Vertex[x=6,y=0]{Y}
    \Vertex[x=3,y=0,LabelOut=true,L=One agent,Lpos=270]{i}

  \tikzstyle{LabelStyle}=[fill=white,sloped]

  \Edge[label=$(1+\epsilon)$](X)(i)
  \Edge[label=$(1-\epsilon)$](i)(Y)
    \Edge[label=$2$](Y)(k)
 \tikzstyle{EdgeStyle}=[bend left]
   \Edge[label=$4$](X)(k)
  \Edge[label=$2$](X)(Y)
\end{tikzpicture}
\caption{There are $\lfloor\frac{n}{2}\rfloor$ agents located at $X$ who prefer $X$, one agent between $X$ and $Y$ that prefers $Y$, and $\lfloor\frac{n}{2}\rfloor$ agents far from either alternative that prefer $Y$. As $\epsilon \rightarrow 0$, the expected distortion of randomized dictatorship is $\frac{3}{2}$.}
\label{fig:medianExample}
\end{figure}

\paragraph{Decisive Voters.} Many of our worst-case examples occur when many agents are indifferent between their top alternative and the optimal alternative. In many settings, however, agents are more \emph{decisive} about their top choice, and prefer it much more than any other alternative. Formally, we say that an agent $i$ whose top choice is $W$ and second choice is $X$ is $\alpha$-\emph{decisive} if $d(i, W) \leq \alpha \cdot d(i, X)$ where $\alpha \in [0, 1]$. We say that a metric space is $\alpha$-\emph{decisive} if for some fixed $\alpha$, every agent is $\alpha$-\emph{decisive}. Every metric space is $1$-decisive, while a metric space in which every agent has distance 0 to her top alternative is $0$-decisive. In fact, $0$-decisive metric spaces are interesting in their own right: they include the case when each voter must exactly coincide with some alternative, and so capture the settings where the set of voters and alternatives is the same. This occurs when every voter corresponds to a possible alternative, such as when a committee must vote to choose one of its members to lead it, or when writers of NSF proposals vote for each others' proposals to be funded.

%For sufficiently small $\alpha$, an $\alpha$-decisive agent must be close to its first choice, and as a result, far away from its second choice. However, $\alpha$-decisiveness is even more powerful than this: it tells us that the agent must be far away from all other alternatives besides its first choice.

Note that when talking about $\alpha$-decisive metrics, $\rho^{-1}(\sigma)$ denotes the set of all $\alpha$-decisive metrics $d$ such that $\sigma$ is consistent with them (as opposed to the set of all such possible metrics). Thus, when we consider distortion in the $\alpha$-decisive setting, it measures the quality of an algorithm with only ordinal knowledge, as compared to the quality of the true optimum solution, assuming that the underlying metric is $\alpha$-decisive.

\section{Distortion of the Sum of Agent Costs}
\subsection{General Metric Spaces}

In this section, we examine the sum objective and provide mechanisms with low distortion.
%when we know that the agents are $\alpha$-decisive. We will analyze the worst-case distortion of two randomized social choice mechanisms: randomized dictatorship and a generalization of proportional to squares for $\alpha$-decisive setting.
We first show that for general metric spaces, the randomized dictatorship mechanism has a distortion of less than 3, which is better than any deterministic mechanism, since all deterministic mechanisms have a worst-case distortion of at least 3 \cite{anshelevich2014approximating}. We then consider the case of two alternatives, and give the best possible randomized mechanism for this special case. As it is more general, we consider the $\alpha$-decisive setting: results for arbitrary metric spaces are simply the results for $1$-decisive agents. In all of our results, we observe that the worst-case distortion is linearly dependent on $\alpha$: the more decisive agents are, the better our mechanisms are able to perform.

We begin this section by addressing the question of how well \emph{any} randomized social choice function can perform. Our first theorem shows that no randomized mechanism can find an alternative that is in expectation within a factor strictly smaller than $1+\alpha$ from the optimum alternative for $\alpha$-decisive metric spaces. Thus no mechanism can have distortion better than 2 for general metric spaces. In comparison, the best known distortion lower bound for deterministic mechanisms is equal to 3 (from \cite{anshelevich2014approximating}).

%We observe that 2 is better than the worst-case distortion lower bound of 3 for deterministic mechanisms.

\begin{thm}\label{thm:randomizedLowerBound}
The worst-case distortion of any randomized mechanism when the metric space is $\alpha$-decisive is at least $1+\alpha$.
\end{thm}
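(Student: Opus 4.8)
The plan is to build on the two-alternative example already developed in the "Examples" paragraph (Figure~\ref{fig:sumLowerbound}) and show that the symmetry of a tied profile forces any randomized mechanism to incur expected cost at least $(1+\alpha)$ times the optimum. First I would fix an even number $n$ of agents and a profile $\sigma$ on two alternatives $X$ and $Y$ in which exactly $n/2$ agents rank $X \succ Y$ and $n/2$ rank $Y \succ X$. Since a randomized social choice function outputs a distribution $p$ over $\{X,Y\}$ depending only on $\sigma$, write $p(X) = q$ and $p(Y) = 1-q$; without loss of generality $q \le 1/2$, so the mechanism puts probability at least $1/2$ on $Y$.

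Next I would exhibit an $\alpha$-decisive metric $d \in \rho^{-1}(\sigma)$ making $X$ the optimum while forcing $Y$ to be expensive. Place the $n/2$ agents in $X^*$ exactly at the point $X$ (cost $0$ for $X$, cost $\delta$ for $Y$, where $\delta$ is the $X$–$Y$ distance), so these agents are trivially $\alpha$-decisive. Place the $n/2$ agents in $Y^*$ at distance $\alpha t$ from $Y$ and distance $t$ from $X$ for a suitable $t$; this makes them $\alpha$-decisive (their top choice $Y$ is within $\alpha$ times the cost of their second choice $X$), and it is realizable as a (pseudo-)metric on a line as long as the triangle inequality $\delta \le t + \alpha t$ and $t \le \delta + \alpha t$ hold. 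Choosing $\delta = t(1+\alpha)$ (the extreme allowed by the triangle inequality) gives $\cost(X,d) = \tfrac{n}{2}\cdot t$ and $\cost(Y,d) = \tfrac{n}{2}\bigl(t(1+\alpha) + \alpha t\bigr) = \tfrac{n}{2}\, t(1+2\alpha)$; after normalizing, $\cost(Y,d)/\cost(X,d) = 1+2\alpha$ — hmm, this is larger than what we want, which is fine for a lower bound on $Y$ alone but I need to be careful that the expected cost, not just $\cost(Y)$, is what gets compared.

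So the key step is the averaging argument. Because the profile is symmetric under swapping $X$ and $Y$, I would also consider the mirror metric $d'$ in which the roles of $X$ and $Y$ are interchanged (now $Y$ is optimal). The mechanism cannot distinguish $\sigma$ from itself, so it uses the same $q$ in both cases. I then compute the distortion as a function of $q$: on $d$ the ratio is $\bigl(q\cdot\cost(X,d) + (1-q)\cdot\cost(Y,d)\bigr)/\cost(X,d) = q + (1-q)(1+2\alpha)$ (with the scaling above), and on $d'$ it is $(1-q) + q(1+2\alpha)$. The worst of these two over the adversary's choice is minimized at $q = 1/2$, giving value $1 + \alpha$. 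Taking a supremum over $n$ and letting the construction be exact (no $\epsilon$ needed here, since I can place agents exactly at $X$) yields that every randomized mechanism has worst-case distortion at least $1+\alpha$.

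The main obstacle is getting the metric constants exactly right so that (i) the metric is genuinely $\alpha$-decisive for \emph{every} agent, (ii) the triangle inequality holds among $X$, $Y$, and the two agent clusters, and (iii) the resulting cost ratio, after the symmetrization/averaging over $q$, bottoms out at precisely $1+\alpha$ rather than something weaker. In particular I should double-check whether placing the $Y$-agents so that $\cost(Y)/\cost(X)$ equals exactly $1+2\alpha$ is the right tuning, or whether a different placement (e.g. making the $X$-agents' cost for $Y$ smaller, so that the \emph{wrong} choice $Y$ contributes relatively more) gives a cleaner $1+\alpha$ after averaging; the cleanest version likely has the optimum's supporters at cost $0$ for their favorite and the ratio of $Y$ to $X$ tuned so the $q=1/2$ value is exactly $1+\alpha$. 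This is a short finite computation but it is where all the care goes.
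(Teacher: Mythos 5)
Your proposal is correct and is essentially the paper's own proof: the same tied two-alternative profile, the same $\alpha$-decisive metric (your placement with $t=\frac{1}{1+\alpha}$, $\delta=1$ is exactly the paper's, giving cost ratio $1+2\alpha$), and the same symmetrization over the mirrored metric forcing $q=\tfrac12$ and value $1+\alpha$. Your closing worry about re-tuning the constants is already resolved by your own computation: the ratio $1+2\alpha$ for the wrong alternative is precisely what makes the averaged bound come out to exactly $1+\alpha$.
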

\begin{proof}
We must show that there exists a preference profile such that for all randomized mechanisms, there always exists an $\alpha$-decisive metric space that induces the preference profile and where the distortion is at least $1+\alpha$. %That is, given a preference profile, for any probability distribution over alternatives, we will show there exists an $\alpha$-decisive metric space where the distortion is least $1+\alpha$.
We will consider a preference profile with $m=2$ alternatives $W,X$ and $n$ agents ($n$ is even) where $\frac{n}{2}$ agents prefer $W$ over $X$ and $\frac{n}{2}$ agents prefer $X$ over $W$.  We claim that no randomized mechanism can have distortion $< 1 +\alpha$ for all metric spaces that induce this profile.

First, we will consider an $\alpha$-decisive metric space that induces the preference profile and where $X$ is optimal. All agents $i$ who prefer $X$ have $d(i, X) = 0$ and $d(i, W) = 1$. The remaining agents have $d(i, W) = \frac{\alpha}{1+\alpha}, d(i, X) = \frac{1}{1+\alpha}$. Thus, $\cost(X) = \frac{n}{2} \cdot \frac{1}{1+\alpha}$ and $\cost(W) = \frac{n}{2}(\frac{\alpha}{1+\alpha} + 1 )$. The distortion of selecting alternative $W$ is $\frac{\cost(W)}{\cost(X)} = 1+2\alpha$. Obviously the distortion of selecting the optimal alternative $X$ is 1. Thus, for any randomized mechanism, the distortion is $p(X) + p(W)(1 + 2\alpha)$, where $p(X), p(W)$ are the probabilities of the randomized mechanism selecting $X$ and $W$, respectively.

Next, we claim there exists a similar $\alpha$-decisive metric space that induces the preference profile and where $W$ is optimal in which the distortion is $p(W) + p(X)(1+2\alpha)$.

Since the mechanism does not know the metric space (or which of $X, W$ is optimal), it cannot obtain a worst-case distortion better than $\max(p(X) + p(W)(1 + 2\alpha), p(W) + p(X)(1+2\alpha))$ since either metric space could have induced the preference profile. Clearly, the worst-case distortion is minimized when $$p(X) + p(W)(1 + 2\alpha) = p(W) + p(X)(1+2\alpha). $$ This reduces to $p(W) = p(X) = \frac{1}{2}$. We observe that $p(X) + p(W)\left(1+2\alpha \right) = 1 + \alpha$ in this case, which gives us the desired lower bound.
\end{proof}

We will now prove several helpful lemmas that are necessary in order to upper-bound the worst-case distortion of our randomized social choice mechanisms. Our first lemma provides a refinement over the standard bound of $d(i,W) \geq \frac{1}{2} d(W,Y)$ (from \cite{anshelevich2014approximating}) for agents that prefer $Y$ to $W$ when the agents are in $\alpha$-decisive spaces and $Y$ is their first preference as well. As we will see, this latter requirement does not impede our ability to find better lower bounds for the optimal alternative in $\alpha$-decisive metrics.

\begin{lem}
\label{lem:dec_agent_lb}
If a metric space is $\alpha$-decisive, then for all alternatives $W \neq Y$, $d(i, W) \geq \frac{1}{1 + \alpha} \cdot d(W, Y)$, for every voter $i \in Y^*$.
\end{lem}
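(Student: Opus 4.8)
The plan is to bound $d(i,W)$ from below by combining two inequalities: the triangle inequality relating $d(i,W)$, $d(i,Y)$, and $d(W,Y)$, and the $\alpha$-decisiveness constraint, which controls $d(i,Y)$ in terms of $d(i,W)$ since $Y$ is the top choice of voter $i$. First I would fix a voter $i \in Y^*$, so $Y \succ_i Z$ for all $Z \neq Y$; in particular $Y$ is $i$'s first choice. Let $X$ denote $i$'s second choice (possibly $X = W$). By $\alpha$-decisiveness, $d(i,Y) \leq \alpha \cdot d(i,X)$. The key observation is that, because $W \neq Y$ and $Y$ is $i$'s favorite, $W$ is ranked no higher than second by $i$, so $d(i,X) \leq d(i,W)$ (consistency of $\sigma$ with $d$ gives $d(i,X) \leq d(i,W)$ whenever $X \succ_i W$ or $X = W$). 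Hence $d(i,Y) \leq \alpha \cdot d(i,W)$.

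Next I would apply the triangle inequality $d(W,Y) \leq d(i,W) + d(i,Y)$, and substitute the bound just obtained:
\[
d(W,Y) \leq d(i,W) + d(i,Y) \leq d(i,W) + \alpha \cdot d(i,W) = (1+\alpha)\, d(i,W).
\]
Rearranging yields $d(i,W) \geq \frac{1}{1+\alpha}\, d(W,Y)$, which is exactly the claimed bound.

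The only subtle point — and the one I would state carefully rather than gloss over — is the step $d(i,Y) \leq \alpha \cdot d(i,W)$. This needs that $i$'s distance to its second choice $X$ is at most its distance to $W$; this holds because $W$ is some alternative other than $i$'s top choice $Y$, so $i$ weakly prefers $X$ to $W$, and consistency of the metric with the preference profile then gives $d(i,X) \leq d(i,W)$. One should also handle the degenerate case where $d(i,X) = d(i,W)$ (ties), but the inequality $d(i,X) \leq d(i,W)$ still holds there, so nothing breaks. I do not expect any real obstacle; the lemma is a short two-inequality argument, and the main point is simply to notice that membership in $Y^*$ lets us invoke decisiveness against $d(i,W)$ itself rather than against some uncontrolled second-choice distance.
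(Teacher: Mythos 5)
Your proof is correct and follows essentially the same route as the paper's: the chain $d(i,Y) \leq \alpha\, d(i,X) \leq \alpha\, d(i,W)$ from decisiveness and consistency, combined with the triangle inequality $d(W,Y) \leq d(i,W) + d(i,Y)$, rearranged to give $d(i,W) \geq \frac{1}{1+\alpha} d(W,Y)$. The extra care you take with the case $X = W$ and ties is fine and matches what the paper does implicitly.
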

\begin{proof}
Consider an $\alpha$-decisive agent $i$ with top choice $Y$ and second choice $Z$. $W$ is an alternative, different from $Y$. By definition, $d(i,Y) \leq \alpha \cdot d(i, Z) \leq   \alpha \cdot d(i, W)$. We observe that
\begin{align*}
d(i,W) &\geq d(W, Y) - d(i, Y) \\
&\geq d(W, Y) - \alpha \cdot d(i, W),
\end{align*}
which implies that $d(i, W) \geq \frac{1}{1+ \alpha} d(W, Y)$.
\end{proof}

We can now derive an improved lower bound of the social cost of the optimal alternative $X$. This is done by applying Lemma~\ref{lem:dec_agent_lb} to every agent (and with alternative $W$ in the lemma being set to $X$) and summing the resulting inequalities.

\begin{lem}
\label{lem:dec_alternative_lb}
If a metric space is $\alpha$-decisive, then for any alternative $X \in M$, $\cost(X) \geq \frac{1}{1 + \alpha} \sum_{Y \in M} |Y^*|  \cdot d(X, Y)$.
\end{lem}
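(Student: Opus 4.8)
The plan is to reduce Lemma~\ref{lem:dec_alternative_lb} directly to Lemma~\ref{lem:dec_agent_lb} by summing over all agents, organized according to which alternative is each agent's top choice. The key observation is that the agents partition into the sets $Y^*$ as $Y$ ranges over $M$, since every agent has exactly one first choice (if there are ties for first choice, pick a consistent tie-break; the bound $d(i,Y)\le\alpha\cdot d(i,Z)$ from $\alpha$-decisiveness still holds for the chosen $Y$, so this causes no difficulty).

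First I would fix the alternative $X\in M$ whose cost we want to lower-bound. Then I would write $\cost(X)=\sum_{i\in N}d(i,X)=\sum_{Y\in M}\sum_{i\in Y^*}d(i,X)$, using the partition of $N$ into the sets $Y^*$. For each $Y\neq X$ and each $i\in Y^*$, Lemma~\ref{lem:dec_agent_lb} (applied with the alternative ``$W$'' in that lemma set equal to $X$) gives $d(i,X)\ge\frac{1}{1+\alpha}d(X,Y)$. Summing over the $|Y^*|$ agents in $Y^*$ yields $\sum_{i\in Y^*}d(i,X)\ge\frac{1}{1+\alpha}|Y^*|\cdot d(X,Y)$. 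For the term $Y=X$, the contribution $\sum_{i\in X^*}d(i,X)\ge 0$, and since $d(X,X)=0$ the quantity $\frac{1}{1+\alpha}|X^*|\cdot d(X,X)$ is also $0$, so the inequality $\sum_{i\in X^*}d(i,X)\ge\frac{1}{1+\alpha}|X^*|\cdot d(X,X)$ holds trivially. Adding all these inequalities over $Y\in M$ gives $\cost(X)\ge\frac{1}{1+\alpha}\sum_{Y\in M}|Y^*|\cdot d(X,Y)$, which is exactly the claim.

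There is essentially no main obstacle here: the lemma is a straightforward aggregation of the per-agent bound. The only points requiring a line of care are (i) checking that Lemma~\ref{lem:dec_agent_lb} genuinely applies with its ``$W$'' instantiated as our $X$ for every agent whose top choice is some $Y\neq X$ — which it does, since the lemma is stated for all $W\neq Y$ and all $i\in Y^*$ — and (ii) handling the $Y=X$ summand, which is vacuous as noted above. I would present this in three or four lines: state the partition, invoke the lemma termwise, and sum.
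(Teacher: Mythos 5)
Your proof is correct and follows exactly the route the paper intends: the paper's own justification is precisely to apply Lemma~\ref{lem:dec_agent_lb} with its $W$ set to $X$ for every agent and sum the resulting inequalities, grouping agents by their top choice. Your additional care about the $Y=X$ summand and tie-breaking is fine but not a point of divergence.
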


Our next lemma is the first pertaining to upper-bounding the worst-case distortion of randomized social choice functions. This lemma parameterizes the distortion by the probability distribution over the alternatives. Thus, it is easily used to quickly bound the distortion for several randomized social choice functions by simply plugging in the appropriate probabilities for each alternative $Y$.

\begin{lem}
\label{lem:dist_bound_dec}
For any instance $\sigma$, social choice function $f$, and $\alpha$-decisive metric space, $$\dist_{\sum}(f, \sigma) \leq 1  + \frac{(1+\alpha)\sum_{Y \in M}p(Y)(n - \frac{2}{1+\alpha}|Y^*|)d(X,Y)}{\sum_{Y \in M} |Y^*| d(X, Y)},$$ where $X$ is the optimal alternative and $p(Y)$ is the probability that alternative $Y$ is selected by
$f$ given profile $\sigma$.
\end{lem}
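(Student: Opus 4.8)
The plan is to bound the expected social cost of $f$ from above and the optimal cost from below, then take the ratio. For the upper bound, I would start from the triangle inequality: for every alternative $Y$ and every agent $i$, $d(i,Y) \le d(i,X) + d(X,Y)$, so $\cost(Y) \le \cost(X) + n\cdot d(X,Y)$. However, this naive bound is too weak; it gives distortion $1 + \frac{n\sum_Y p(Y)d(X,Y)}{\cost(X)}$, and after applying Lemma~\ref{lem:dec_alternative_lb} to the denominator we would only get the coefficient $(1+\alpha)$ on the whole sum $\sum_Y p(Y)\,n\,d(X,Y)$, not the sharper $n - \frac{2}{1+\alpha}|Y^*|$ appearing in the claimed bound. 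The key refinement is to treat the agents in $Y^*$ separately from the rest when bounding $\cost(Y)$.

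So the main step is: split $\cost(Y) = \sum_{i \in Y^*} d(i,Y) + \sum_{i \notin Y^*} d(i,Y)$. For agents $i \notin Y^*$, use the crude triangle-inequality bound $d(i,Y) \le d(i,X) + d(X,Y)$. For agents $i \in Y^*$, instead observe that $Y$ is their \emph{top} choice, so $d(i,Y) \le d(i,X)$ (since $X$ is also an alternative and $\sigma$ is consistent with $d$); combining with the triangle inequality $d(i,X) \le d(i,Y) + d(X,Y)$ we also get control, but the useful direction is $d(i,Y)\le d(i,X)$, which contributes \emph{nothing} extra beyond $\cost(X)$ and saves us the $d(X,Y)$ term for those $|Y^*|$ agents. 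This yields $\cost(Y) \le \cost(X) + (n - |Y^*|)\,d(X,Y)$. Then $\cost(f,\sigma) = \sum_Y p(Y)\cost(Y) \le \cost(X) + \sum_Y p(Y)(n-|Y^*|)d(X,Y)$, and $\dist_{\sum}(f,\sigma) \le 1 + \frac{\sum_Y p(Y)(n-|Y^*|)d(X,Y)}{\cost(X)}$.

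Finally, I would substitute the lower bound $\cost(X) \ge \frac{1}{1+\alpha}\sum_{Y} |Y^*|\,d(X,Y)$ from Lemma~\ref{lem:dec_alternative_lb} into the denominator, giving $\dist_{\sum}(f,\sigma) \le 1 + \frac{(1+\alpha)\sum_Y p(Y)(n-|Y^*|)d(X,Y)}{\sum_Y |Y^*|\,d(X,Y)}$. This is close but the stated bound has $(n - \frac{2}{1+\alpha}|Y^*|)$ rather than $(n-|Y^*|)$ inside. To recover the extra $-\frac{2-( 1+\alpha)}{1+\alpha}|Y^*| = -\frac{1-\alpha}{1+\alpha}|Y^*|$ worth of savings, I expect one must do the split more carefully: for $i\in Y^*$ apply Lemma~\ref{lem:dec_agent_lb} to get $d(i,X)\ge\frac{1}{1+\alpha}d(X,Y)$, and also note $d(i,Y)\le \alpha\, d(i,Z)$ together with a triangle-inequality argument relating $d(i,Y)$ to $d(X,Y)$, so that each agent in $Y^*$ not only avoids adding $d(X,Y)$ to $\cost(Y)$ but actively reduces the surplus. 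In other words, I would carry an exact comparison $\cost(Y) - \cost(X) \le \sum_Y[\text{per-agent surplus}]$ where agents in $Y^*$ give a per-agent surplus of roughly $d(X,Y) - \frac{2}{1+\alpha}d(i,X)$-type term, then re-apply Lemma~\ref{lem:dec_alternative_lb}. The main obstacle is getting this bookkeeping exactly right: tracking which triangle inequalities to apply to the $Y^*$ agents so the coefficient comes out to precisely $n - \frac{2}{1+\alpha}|Y^*|$, rather than a weaker constant, without double-counting the contribution of $\cost(X)$ in the denominator's lower bound.
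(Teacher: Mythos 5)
Your plan follows the paper's proof almost exactly (split off the $Y^*$ agents, use decisiveness for them, convert $d(i,X)$ to $d(X,Y)$ via Lemma~\ref{lem:dec_agent_lb}, and lower-bound the denominator by Lemma~\ref{lem:dec_alternative_lb}), but as written it only establishes the weaker coefficient $n-|Y^*|$, because in your main derivation you used only $d(i,Y)\le d(i,X)$ for $i\in Y^*$; the sharpening you describe in your last paragraph is exactly what is needed, and it does go through in one line. For $i\in Y^*$, $\alpha$-decisiveness gives $d(i,Y)\le \alpha\, d(i,Z)\le \alpha\, d(i,X)$ (where $Z$ is $i$'s second choice and $X\neq Y$), so the per-agent surplus is $d(i,Y)-d(i,X)\le -(1-\alpha)\,d(i,X)\le -\frac{1-\alpha}{1+\alpha}\,d(X,Y)$ by Lemma~\ref{lem:dec_agent_lb}, and since $-\frac{1-\alpha}{1+\alpha}=1-\frac{2}{1+\alpha}$, each $Y^*$-agent contributes at most $\bigl(1-\frac{2}{1+\alpha}\bigr)d(X,Y)$ while every other agent contributes at most $d(X,Y)$, giving precisely $\cost(Y)\le \cost(X)+\bigl(n-\frac{2}{1+\alpha}|Y^*|\bigr)d(X,Y)$. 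Note that your guessed per-agent form $d(X,Y)-\frac{2}{1+\alpha}d(i,X)$ is not the right one: after applying Lemma~\ref{lem:dec_agent_lb} it would only yield the coefficient $1-\frac{2}{(1+\alpha)^2}$, which is too weak. Your worry about double-counting is unfounded: Lemma~\ref{lem:dec_agent_lb} is applied only to the surplus $\cost(Y)-\cost(X)$ in the numerator, and Lemma~\ref{lem:dec_alternative_lb} only to $\cost(X)$ in the denominator, exactly as you proposed. (The paper's proof additionally distinguishes $YX\setminus Y^*$ from $XY$, using $d(i,Y)\le d(i,X)$ for the former, but it then discards the difference via $|YX|\ge|Y^*|$, so your coarser two-way split yields the same stated bound.)
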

\begin{proof} Consider an alternative $Y \neq X$: we want to upper-bound $\cost(Y)$. For all $i \in Y^*$, we know that $d(i,Y) \leq \alpha\cdot d(i,X)$ by the definition of $\alpha$-decisiveness. More generally, for $i \in YX$, we have a weaker bound of $d(i,Y) \leq d(i,X)$. Finally, for $i \in XY$, we can use the triangle inequality to obtain $d(i,Y) \leq d(i,X) + d(X,Y)$. Combining these three inequalities together, we are able to derive
\begin{align*}\cost(Y)
&= \sum_{i \in N} d(i,Y) \\
&\leq  \alpha \sum_{i \in Y^*} d(i,X) + \sum_{i \in YX \setminus  Y^*} d(i,X) + \sum_{i \in XY} \left(d(i, X) +d(X,Y) \right) \\
&= \sum_{i \in N} d(i,X) +  |XY| d(X,Y)  - (1-\alpha)\sum_{i \in Y^*} d(i, X) \\
&= \sum_{i \in N} d(i,X) +  (n - |YX|) d(X,Y)  - (1-\alpha)\sum_{i \in Y^*} d(i, X).
\end{align*}
We know that $|YX| \geq |Y^*|$. Furthermore, by Lemma~\ref{lem:dec_agent_lb}, we know that for $i \in Y^*$, $d(i,X) \geq \frac{1}{1+\alpha} d(X,Y)$. We can apply these two bounds to our previous expression to conclude that
\begin{align}
\cost(Y)
&\leq  \sum_{i \in N} d(i,X) +  (n - |Y^*|) d(X,Y)  - \frac{1-\alpha}{1+\alpha} |Y^*|d(X,Y) \\
&=  \cost(X) + \left(n - \frac{2}{1+\alpha}|Y^*|\right) d(X,Y).\label{eq.1}
\end{align}

In addition to an upper bound for $\cost(Y)$ where $Y \neq X$, we need a lower bound for the cost of the optimal alternative $X$. By Lemma~\ref{lem:dec_alternative_lb}, we have that
\begin{equation}\label{eq.2}
\cost(X) \geq \frac{1}{1+\alpha} \sum_{Y \in M} |Y^*| \cdot d(X,Y).
\end{equation}
With these two inequalities, we are now able to bound the distortion as follows:
\begin{align*}
\dist_{\sum}(f, \sigma)
&= \frac{\sum_{Y \in M} p(Y) \cost(Y) }{\cost(X)} \\
&\leq p(X) + \frac{\sum_{Y \neq X} p(Y)\left(\cost(X) + \left(n - \frac{2}{1+\alpha}|Y^*|\right) d(X,Y) \right) }{\cost(X)}~~\text{(Due to Ineq. (\ref{eq.1}))} \\
&= 1 + \frac{\sum_{Y \neq X} p(Y)\left(n - \frac{2}{1+\alpha}|Y^*|\right) d(X,Y) }{\cost(X)} \\
&\leq 1 + \frac{\sum_{Y \neq X} p(Y)\left(n - \frac{2}{1+\alpha}|Y^*|\right) d(X,Y) }{\frac{1}{1+\alpha} \sum_{Y \in M} |Y^*| \cdot d(X,Y)} ~~\text{(Due to Ineq. (\ref{eq.2}))} ,
\end{align*}
which gives us the desired result.
% Old proof
%We begin by using the definitions of distortion and $\alpha$-decisiveness to derive that
%\begin{align*}
%&\quad \frac{\sum_{Y \in M}p(Y)\cost(Y)}{\cost(X)} \\
%&\leq p(X) + \frac{\sum_{Y \neq X}p(Y)\left(\alpha \sum_{i \in Y^*} d(i,X) + \sum_{i: Y \succ_{i} X, i \notin Y^*} d(i,X) + \sum_{i: X \succ_{i} Y} (d(i, X) +d(X,Y) ) \right)}{\cost(X)} \\
%&= p(X) + \frac{\sum_{Y \neq X}p(Y)\left(\sum_{i \in N} d(i,X) +  |XY| d(X,Y)  - (1-\alpha)\sum_{i \in Y^*} d(i, X) \right)}{\sum_{i \in N} d(i,X)} \\
%&= 1 + \frac{\sum_{Y \neq X}p(Y)\left(|XY| d(X,Y)  - (1-\alpha)\sum_{i \in Y^*} d(i, X) \right)}{\sum_{i \in N} d(i,X)} \\
%&= 1 + \frac{\sum_{Y \neq X}p(Y)\left((n - |YX|) d(X,Y)  - (1-\alpha)\sum_{i \in Y^*} d(i, X) \right)}{\sum_{i \in N} d(i,X)}.
%\end{align*}
%
%Next, we use the fact that $|YX| \geq |Y^*|$ and Lemmas~\ref{lem:dec_agent_lb} and \ref{lem:dec_alternative_lb} to obtain that
%\begin{align*}
%&\leq 1 + \frac{\sum_{Y \neq X}p(Y)\left((n - |Y^*|) d(X,Y)  - (1-\alpha)\sum_{i \in Y^*} d(i, X) \right)}{\sum_{i \in N} d(i,X)} \\
%&\leq 1 + \frac{\sum_{Y \neq X}p(Y)\left((n - |Y^*|)d(X,Y) - \frac{1-\alpha}{1+\alpha} |Y^*|d(X, Y) \right)}{\frac{1}{1+\alpha} \sum_{Y \in M} |Y^*| d(X, Y)} \\
%&=  1 + \frac{(1+\alpha)\sum_{Y \in M}p(Y)(n - \frac{2}{1+\alpha}|Y^*|)d(X,Y)}{\sum_{Y \in M} |Y^*| d(X, Y)}
%\end{align*}
%which is the desired result.
\end{proof}

The following theorem is our main result of this section. It states that in the worst case, the distortion of randomized dictatorship is strictly better than 3 (in fact, it is at most $3 - \frac{2}{n}$, which occurs when $\alpha = 1, |W^*| = 1$ in the theorem below). Thus, this simple randomized mechanism has better distortion than {\em any} deterministic mechanism, since no deterministic mechanism can have distortion strictly better than 3 in the worst case \cite{anshelevich2014approximating}. This is surprising for several reasons. First, randomized dictatorship only operates on the first preferences of every agent: there is no need to elicit the full preference ranking of every agent, only their top choice. Second, randomized dictatorship is strategy-proof, unlike many deterministic mechanisms. Finally, randomized dictatorship can be thought of as a randomized generalization of plurality or dictatorship. Both of these deterministic mechanisms have unbounded distortion, which means that adding some randomization significantly improves the distortion of these mechanisms. %This property does not seem to hold for other deterministic mechanisms like Borda or Copeland.

\begin{thm}
\label{thm:randomized_dictatorship_dec}
If a metric space is $\alpha$-decisive, then the distortion of randomized dictatorship is at most $2 + \alpha -  \frac{2|W^*|}{n}$, where $W = \arg\min_{Y \in M: |Y^*|> 0} |Y^*|$, and this bound is tight.
\end{thm}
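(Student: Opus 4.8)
The plan is to split the statement into an upper bound, which follows almost mechanically from Lemma~\ref{lem:dist_bound_dec}, and a matching construction showing the bound is tight.

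\textbf{Upper bound.} I would substitute the randomized-dictatorship probabilities $p(Y)=|Y^*|/n$ into the bound of Lemma~\ref{lem:dist_bound_dec}. Distributing, the numerator there equals $\sum_{Y\in M}|Y^*|\,d(X,Y)\big((1+\alpha)-\tfrac{2|Y^*|}{n}\big)$, so the bound rewrites as
\[
\dist_{\sum}(f,\sigma)\;\le\;1+\frac{\sum_{Y\in M}|Y^*|\,d(X,Y)\big((1+\alpha)-\tfrac{2|Y^*|}{n}\big)}{\sum_{Y\in M}|Y^*|\,d(X,Y)},
\]
i.e.\ $1$ plus a weighted average of the numbers $(1+\alpha)-\tfrac{2|Y^*|}{n}$ with the nonnegative weights $|Y^*|\,d(X,Y)$. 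A term carries positive weight only when $|Y^*|>0$ and $Y\ne X$, and for every such $Y$ the choice of $W$ gives $|Y^*|\ge|W^*|$, hence $(1+\alpha)-\tfrac{2|Y^*|}{n}\le(1+\alpha)-\tfrac{2|W^*|}{n}$. Since a weighted average never exceeds the maximum of its terms, the distortion is at most $1+(1+\alpha)-\tfrac{2|W^*|}{n}=2+\alpha-\tfrac{2|W^*|}{n}$. The only case not covered is when all weights vanish, i.e.\ $d(X,Y)=0$ for every $Y$ with $|Y^*|>0$; then for each such $Y$ the triangle inequality gives $d(i,Y)\le d(i,X)$ for all $i$, so $\cost(Y)\le\cost(X)$, and since $X$ is optimal $\cost(Y)=\cost(X)$, whence the distortion is exactly $1$ and the bound still holds.

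\textbf{Tightness.} I would generalize the instance from the proof of Theorem~\ref{thm:randomizedLowerBound}. Fix an integer $k$ with $1\le k\le n/2$ and take two alternatives $W,X$ with $k$ agents ranking $W$ first and $n-k$ ranking $X$ first, so $W$ is a minimum-support alternative with $|W^*|=k$. Realize the metric on a line: put $X$ at $0$ and $W$ at $1$, place the $n-k$ agents who prefer $X$ at the point $0$, and place the $k$ agents who prefer $W$ at the point $\tfrac{1}{1+\alpha}$ (each then has cost $\tfrac{1}{1+\alpha}$ for $X$ and $\tfrac{\alpha}{1+\alpha}$ for $W$). This metric is $\alpha$-decisive — the $X$-agents sit on their top choice, and each $W$-agent satisfies $d(i,W)=\alpha\,d(i,X)$ with equality — and it induces the chosen profile. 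One computes $\cost(X)=\tfrac{k}{1+\alpha}$ and $\cost(W)=n-\tfrac{k}{1+\alpha}$, so $X$ is optimal (using $k\le n/2$), and
\[
\cost(f(\sigma))=\tfrac{n-k}{n}\cost(X)+\tfrac{k}{n}\cost(W)=\tfrac{n-2k}{n}\cost(X)+k,
\]
which upon division by $\cost(X)=\tfrac{k}{1+\alpha}$ yields distortion exactly $2+\alpha-\tfrac{2k}{n}=2+\alpha-\tfrac{2|W^*|}{n}$; taking $k=1,\ \alpha=1$ recovers the worst case $3-\tfrac{2}{n}$ mentioned in the text.

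\textbf{Main obstacle.} The algebra is routine; the delicate part is the tightness construction, where the agent locations must be chosen so that the inequalities used in the proof of Lemma~\ref{lem:dist_bound_dec} (both the upper bound on $\cost(Y)$ and the lower bound on $\cost(X)$ from Lemma~\ref{lem:dec_alternative_lb}) hold with equality simultaneously, and where one must check that $X$ — the alternative randomized dictatorship selects with the larger probability — is genuinely the optimum for all admissible $n$, $\alpha$, and $k$, rather than $W$.
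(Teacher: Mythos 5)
Your proof is correct and follows essentially the same route as the paper: the upper bound comes from plugging $p(Y)=|Y^*|/n$ into Lemma~\ref{lem:dist_bound_dec} and using $|Y^*|\ge|W^*|$ for every supported alternative, and the tightness instance (two alternatives, $n-k$ agents at $X$ and $k$ agents at distance $\tfrac{\alpha}{1+\alpha}$ from $W$ and $\tfrac{1}{1+\alpha}$ from $X$) is exactly the paper's construction. Your weighted-average phrasing and explicit treatment of the degenerate case $\sum_Y |Y^*|\,d(X,Y)=0$ are harmless refinements of the same argument.
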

\begin{proof}
Let $X$ be the optimal alternative. We first apply Lemma~\ref{lem:dist_bound_dec} and then use the definition of $|W^*|$:
\begin{align*}
\dist_{\sum}(f, \sigma)
&\leq 1 + \frac{(1+\alpha)\sum_{Y \in M}p(Y)(n - \frac{2}{1+\alpha}|Y^*|)d(X,Y)}{\sum_{Y \in M} |Y^*| d(X, Y)} \\
&\leq 1 + \frac{(1+\alpha)\sum_{Y \in M}p(Y)(n - \frac{2}{1+\alpha}|W^*|)d(X,Y)}{\sum_{Y \in M} |Y^*| d(X, Y)} \\
&= 1 + \frac{(1+\alpha)\sum_{Y \in M} \frac{|Y^*|}{n} \left(n - \frac{2}{1+\alpha}|W^*|\right) d(X, Y)}{\sum_{Y \in M} |Y^*| d(X, Y)} \\
&= 1 + \frac{(1+\alpha)\left(1 - \frac{2}{1+\alpha}\frac{|W^*|}{n}\right)\sum_{Y \in M} |Y^*|d(X,Y)}{\sum_{Y \in M} |Y^*| d(X, Y)} \\
&= 2 + \alpha - \frac{2|W^*|}{n}.
\end{align*}

We will now show that this bound is tight, using a generalized example of Figure~\ref{fig:randomizedDictatorshipLowerbound}. To do this, we must show there exists a preference profile induced by an $\alpha$-decisive metric space where the distortion is at least $2 + \alpha - \frac{2|W^*|}{n}$. We consider a preference profile in which there are two alternatives $W,X$ such that $|W^*| \leq |X^*|$. We will now show there exists an $\alpha$-decisive metric space that induces this profile that achieves the aforementioned distortion.

All agents $i$ who prefer $X$ have $d(i, X) = 0$ and $d(i, W) = 1$. The remaining agents have $d(i, W) = \frac{\alpha}{1+\alpha}, d(i, X) = \frac{1}{1+\alpha}$. Clearly, all of the agents are $\alpha$-decisive. We observe that $\cost(X) = \frac{|W^*|}{1+\alpha}$ and $\cost(W) = \frac{\alpha|W*|}{1+\alpha} + |X^*|$. Thus, the distortion of randomized dictatorship is
\begin{align*}
\frac{p(X)\cost(X) + p(W) \cost(W)}{\cost(X)}
&= \frac{|X^*|}{n} + \frac{|W^*|}{n}\left[\frac{\frac{\alpha}{1+\alpha}|W^*| + |X^*|}{\frac{1}{1+\alpha}|W^*|}\right] \\
&= \frac{(1+\alpha)|X^*| + \alpha|W^*| + |X^*|}{n} \\
&=  \frac{(2+\alpha)(n - |W^*|) + \alpha |W^*|}{n} \\
&= 2 + \alpha - \frac{2|W^*|}{n}.
\end{align*}
\end{proof}

\begin{figure}
\centering
\begin{tikzpicture}[scale=1,transform shape]
        \Vertex[x=0,y=0,LabelOut=true,L=$n-1$ agents,Lpos=270]{j}
  \Vertex[x=0,y=0,]{X}

  \Vertex[x=8,y=0]{W}
    \Vertex[x=4,y=0,LabelOut=true,L=One agent,Lpos=270]{i}

  \tikzstyle{LabelStyle}=[fill=white,sloped]

  \Edge[label=$(\frac{1}{2}+\epsilon)$](X)(i)
  \Edge[label=$(\frac{1}{2}-\epsilon)$](i)(W)
 \tikzstyle{EdgeStyle}=[bend left]

  \Edge[label=$1$](X)(W)
  \tikzstyle{EdgeStyle}=[bend right]
\end{tikzpicture}
\caption{Consider the case where $\alpha = 1$ and $|W^*| = 1$. There are $n-1$ agents located at $X$ who prefer $X$ and one agent between $X$ and $W$ who prefers $W$. As $\epsilon \rightarrow 0$, the worst-case distortion of randomized dictatorship approaches $3 - \frac{2}{n}$.}
\label{fig:randomizedDictatorshipLowerbound}
\end{figure}
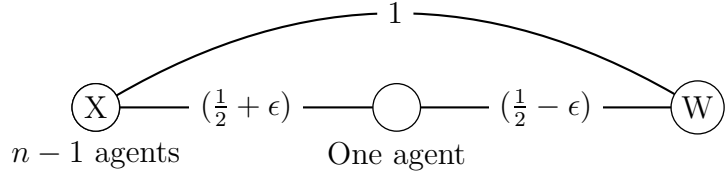

While randomized dictatorship performs well, it still does not achieve the lower bound on distortion of $1 + \alpha$ for randomized mechanisms. In general, we do not know of randomized mechanisms that can achieve this bound. However, we will now define an optimal mechanism for $\alpha$-decisive metric spaces when there are $m=2$ alternatives. This mechanism is a generalization of proportional to squares that is parameterized by $\alpha$. For $\alpha = 1$, the mechanism is in fact ordinary proportional to squares. This mechanism addresses the worst cases of randomized dictatorship by placing more probability on alternatives that receive vast majorities of the votes, if they exist.

\paragraph{$\alpha$-Generalized Proportional to Squares.} We will provide a generalization of the proportional to squares mechanism for $m=2$ that is also a function of $\alpha$. An alternative $Y$ is selected with probability $$p(Y) = \frac{(1+\alpha)|Y^*|^2 - (1-\alpha)|X^*||Y^*|}{(1+\alpha)(|X^*|^2+|Y^*|^2) - 2(1-\alpha)|X^*||Y^*|},$$
 where $X$ is the second alternative.

\begin{thm}
\label{thm:gen_proportional_squares_dec}
If a metric space is $\alpha$-decisive and $m=2$, then the distortion of $\alpha$-generalized proportional to squares is $1+\alpha$, and this is tight.
\end{thm}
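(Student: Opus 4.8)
The plan is to get the upper bound $\dist_{\sum}(f,\sigma)\le 1+\alpha$ directly out of Lemma~\ref{lem:dist_bound_dec}, and to get tightness essentially for free from Theorem~\ref{thm:randomizedLowerBound}, whose lower-bound construction is already an $m=2$, $\alpha$-decisive instance on which \emph{no} randomized mechanism beats $1+\alpha$. So all the real work is the upper bound on a two-alternative profile.

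Fix a profile over alternatives $\{X,W\}$; for the main computation let $X$ be the optimal one and set $a=|X^*|$, $b=|W^*|$, so $a+b=n$. Since $d(X,X)=0$, specializing Lemma~\ref{lem:dist_bound_dec} to $m=2$ annihilates every term except the $Y=W$ terms in both the numerator and the denominator, and the common factor $d(X,W)$ cancels. Using $(1+\alpha)\bigl(n-\tfrac{2}{1+\alpha}b\bigr)=(1+\alpha)a-(1-\alpha)b$, this leaves
\[
\dist_{\sum}(f,\sigma)\le 1+\frac{p(W)\bigl[(1+\alpha)a-(1-\alpha)b\bigr]}{b}.
\]
Now substitute the mechanism's probability, which can be written $p(W)=\dfrac{b\bigl[(1+\alpha)b-(1-\alpha)a\bigr]}{D}$ with $D=(1+\alpha)(a^2+b^2)-2(1-\alpha)ab$ the (symmetric) denominator; the factors of $b$ cancel and the bound becomes $1+\dfrac{\bigl[(1+\alpha)b-(1-\alpha)a\bigr]\bigl[(1+\alpha)a-(1-\alpha)b\bigr]}{D}$.

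It then remains to check $D>0$ and that the second summand is at most $\alpha$, i.e. $\bigl[(1+\alpha)b-(1-\alpha)a\bigr]\bigl[(1+\alpha)a-(1-\alpha)b\bigr]\le \alpha D$. Expanding both sides (it is convenient to use $u+v=2$, $u-v=2\alpha$, $uv=1-\alpha^2$ with $u=1+\alpha$, $v=1-\alpha$), the entire slack collapses to $\alpha D - \text{LHS}=(1+\alpha)(a-b)^2\ge 0$, which is exactly what we need; viewing $D$ as a quadratic in $a/b$ with negative discriminant $-16\alpha$ shows $D>0$ except in the single degenerate case $\alpha=0,\ a=b$. I expect this identity — the fact that the whole gap is precisely $(1+\alpha)(a-b)^2$ — to be the crux of the proof; once it is in hand the bound $\dist_{\sum}(f,\sigma)\le1+\alpha$ drops out.

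Two boundary situations get a short separate remark. If $\alpha=0$ and $a=b$, both alternatives have equal social cost, so every distribution has distortion $1$. If the split is lopsided enough that the formula would assign a negative value to the minority alternative — which occurs exactly when $\tfrac{\min(a,b)}{\max(a,b)}<\tfrac{1-\alpha}{1+\alpha}$ — the mechanism instead places all probability on the majority alternative; applying Lemma~\ref{lem:dist_bound_dec} in this regime (with whichever alternative is optimal in the ``$X$'' role) yields a distortion upper bound strictly below $1$ unless the majority alternative is the optimal one, so in fact it is, and the distortion equals $1\le 1+\alpha$. Combining the main computation with these two remarks gives $\dist_{\sum}(f,\sigma)\le 1+\alpha$ on every $m=2$, $\alpha$-decisive instance, and Theorem~\ref{thm:randomizedLowerBound} shows this cannot be improved, so the bound is tight.
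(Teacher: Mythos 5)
Your proof follows the paper's argument essentially verbatim: specialize Lemma~\ref{lem:dist_bound_dec} to $m=2$, substitute the mechanism's probability so the bound becomes $1+\bigl[(1+\alpha)|Y^*|-(1-\alpha)|X^*|\bigr]\bigl[(1+\alpha)|X^*|-(1-\alpha)|Y^*|\bigr]/D$, and observe that the slack against $1+\alpha$ collapses to the perfect square $(1+\alpha)(|X^*|-|Y^*|)^2\ge 0$, with tightness inherited from the $m=2$, $\alpha$-decisive construction of Theorem~\ref{thm:randomizedLowerBound}. Your additional handling of the degenerate cases ($\alpha=0$ with a tied split, where $D=0$, and lopsided splits where the stated formula would assign a negative value to the minority alternative) is careful bookkeeping that the paper silently omits, but it does not change the approach.
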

\begin{proof}
Suppose $X$ is optimal, and $Y$ is the second alternative. By Lemma~\ref{lem:dist_bound_dec}, we have that the distortion is at most
$$1 + \frac{(1+\alpha)p(Y)(n - \frac{2}{1+\alpha}|Y^*|)d(X,Y)}{ |Y^*| d(X, Y)}.$$
%Then bounding the distortion is simply using the fact that  and plugging in $p(Y)$ in order to derive that the distortion is at most
Then, in order to bound the distortion, it suffices to simply use the fact that $n = |X^*| + |Y^*|$ and plug in $p(Y)$. We obtain a distortion of at most
\begin{align*}
&\quad  1 + \frac{(1+\alpha)p(Y)(|X^*| + |Y^*| - \frac{2}{1+\alpha}|Y^*|)d(X,Y)}{ |Y^*| d(X, Y)} \\
&= 1 + \frac{(1+\alpha)(|X^*| - \frac{1-\alpha}{1+\alpha}|Y^*|)\left((1+\alpha)|Y^*| - (1-\alpha)|X^*|\right) }{(1+\alpha)(|X^*|^2+|Y^*|^2) - 2(1-\alpha)|X^*||Y^*|} \\
&= 1 + \frac{2(1+ \alpha^2 )|X^*||Y^*| - (1+\alpha)(1-\alpha)(|X^*|^2 + |Y^*|^2)}{(1+\alpha)(|X^*|^2+|Y^*|^2) - 2(1-\alpha)|X^*||Y^*|} \\
&= \frac{(1+\alpha)\left(\alpha |Y^*|^2 + \alpha |X^*|^2 + 2\alpha |X^*||Y^*|\right)}{(1+\alpha)(|X^*|^2+|Y^*|^2) - 2(1-\alpha)|X^*||Y^*|}.
\end{align*}
In order to complete our proof, we must show that the numerator is at most a factor of $1+\alpha$ larger than the denominator. We claim that $$\alpha |Y^*|^2 + \alpha |X^*|^2 + 2\alpha |X^*||Y^*| \leq (1+\alpha)(|X^*|^2+|Y^*|^2) - 2(1-\alpha)|X^*||Y^*|.$$ This follows from the fact that $|X^*|^2 + |Y^*|^2 - 2|X^*||Y^*| = (|X^*| - |Y^*|)^2 \geq 0$. Thus, the distortion is at most $1+\alpha$, as desired.
\end{proof}

\subsection{1-Euclidean Preferences}
We now consider a well-known and well-studied special case of 1-Euclidean preferences \cite{elkind2014recognizing,procaccia2013approximate} in which all agents and alternatives are on the real number line and the metric is defined to be the Euclidean distance. First, we observe that in this setting, a Condorcet winner always exists, so the distortion is at most 3, and this is tight for deterministic mechanisms. This is true due to the results in \cite{anshelevich2014approximating}, which state that when an alternative is chosen which pairwise defeats the optimal alternative, then the distortion is at most 3. In designing an optimal randomized mechanism, we heavily use properties of this metric space from \cite{elkind2014recognizing}. Namely, using only the preference profile, we can determine the ordering of the agents on the line (which is unique up to reversal and permutations of identical voters) and the unique ordering of the alternatives that are between the top preference of the first agent and the top preference of the last agent. While this information is not enough to find the optimal alternative, using this information we will be able to significantly reduce the set of alternatives that can be optimal. Then we will use $\alpha$-generalized proportional to squares on this restricted set of alternatives to achieve a better distortion bound. Our full mechanism is shown below as Algorithm~\ref{alg:1euclidean}.

\begin{algorithm}[h]
\caption{Optimal randomized mechanism for the $\alpha$-decisive, 1-Euclidean space}
\begin{algorithmic}
\REQUIRE A preference profile $\sigma$
\ENSURE A probability distribution $p$ over the alternatives
\STATE $>_N \gets$ ordering of the agents \cite{elkind2014recognizing}
\STATE $>_M \gets$ ordering of the alternatives \cite{elkind2014recognizing}
\STATE $i' \gets $ median agent of $>_N$
\STATE $X \gets$ top preference of $i'$
\STATE $Y \gets$ alternative directly left of $X$ in $>_M$
\STATE $Z \gets$ alternative directly right of $X$ in $>_M$
\IF{$|YX| < |ZX|$}
\vskip 2pt
%\STATE Use $\alpha$-generalized proportional to squares on $X, Z$.
\STATE $p(Z) \gets \ddfrac{(1+\alpha)|ZX|^2 - (1-\alpha)|XZ||ZX|}{(1+\alpha)(|XZ|^2+|ZX|^2) - 2(1-\alpha)|XZ||ZX|}$
\vskip 2pt
\STATE $p(X) \gets \ddfrac{(1+\alpha)|XZ|^2 - (1-\alpha)|XZ||ZX|}{(1+\alpha)(|XZ|^2+|ZX|^2) - 2(1-\alpha)|XZ||ZX|}$
\vskip 2pt
%\FORALL{$W \neq X, Z$}
%\STATE $p(W) \gets 0$
%\ENDFOR
\ELSIF{$|YX| > |ZX|$}
%\STATE Use $\alpha$-generalized proportional to squares on $X, Y$.
\vskip 2pt
\STATE $p(Y) \gets \ddfrac{(1+\alpha)|YX|^2 - (1-\alpha)|XY||YX|}{(1+\alpha)(|XY|^2+|YX|^2) - 2(1-\alpha)|XY||YX|}$
\vskip 2pt
\STATE $p(X) \gets \ddfrac{(1+\alpha)|XY|^2 - (1-\alpha)|XY||YX|}{(1+\alpha)(|XY|^2+|YX|^2) - 2(1-\alpha)|XY||YX|}$
\vskip 4pt
%\FORALL{$W \neq X, Y$}
%\STATE $p(W) \gets 0$
%\ENDFOR
\ELSE
\STATE $p(X) \gets 1$
%\FORALL{$W \neq X$}
%\STATE $p(W) \gets 0$
%\ENDFOR
\ENDIF
\end{algorithmic}
\label{alg:1euclidean}
\end{algorithm}

We will now show that this mechanism has worst-case distortion at most $1+\alpha$ through a series of steps in which we reduce the set of possible optimal alternatives from $m$ to 2. In our first lemma, we show that the optimal alternative must be one of the two alternatives on either side of the median agent from our agent ordering. One of these alternatives must be the top preference of the median agent. However, since we do not know if the median agent's top preference is to the left or right of it, we must consider three alternatives: her top preference and the two alternatives on either side of the top preference. This reduces our set of optimal alternatives from $m$ to 3.

\begin{lem}\label{lem:top3}
In the 1-Euclidean setting, consider the median agent $i'$. Let this agent's top preference be $X$. Call the alternatives directly to the left and right of this alternative $Y$ and $Z$, respectively. Then $X, Y $ or $Z$ must be optimal.
\end{lem}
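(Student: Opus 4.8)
The plan is to reduce everything to positions on the line and exploit convexity of the sum objective. In the $1$-Euclidean setting every agent $i$ and every alternative $A$ is a point $p_i,p_A\in\mathbb{R}$, and the orders $>_N,>_M$ produced in Algorithm~\ref{alg:1euclidean} (via \cite{elkind2014recognizing}) are exactly the left-to-right orders of these points. Hence $\cost(A)=\sum_{i\in N}|p_i-p_A|$ is a convex, piecewise-linear function of $p_A$; writing $p_{(1)}\le\cdots\le p_{(n)}$ for the sorted agent positions and $L=p_{(\lceil n/2\rceil)}$, $R=p_{(\lfloor n/2\rfloor+1)}$, its set of minimizers is the median interval $[L,R]$, it is non-increasing on $(-\infty,L]$, and non-decreasing on $[R,\infty)$. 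The median agent $i'$ sits at $L$ or at $R$; since the statement is invariant under reflecting the line (which swaps the roles of $Y$ and $Z$), I would assume $p_{i'}=L$. I will write $p_Y\le p_X\le p_Z$ for the three relevant positions, recalling that $Y,Z$ are the immediate $>_M$-neighbours of $X$.

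First I would locate $Y$ and $Z$ relative to $L$. Because $X$ is $i'$'s top choice, it minimizes $|p_{i'}-p_A|$ over $A\in M$, so $|p_{i'}-p_Y|\ge|p_{i'}-p_X|$ and $|p_{i'}-p_Z|\ge|p_{i'}-p_X|$. If $p_Y>L$ then $L<p_Y<p_X$ forces $|p_{i'}-p_Y|<|p_{i'}-p_X|$, a contradiction; hence $p_Y\le L$. Symmetrically, $p_Z<L$ would give $p_X<p_Z<L$ and $|p_{i'}-p_Z|<|p_{i'}-p_X|$, so $p_Z\ge L$.

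The heart of the argument is then two monotonicity observations. On the left, any alternative $A$ with $p_A\le p_Y$ satisfies $p_A\le p_Y\le L$, so by monotonicity $\cost(A)\ge\cost(Y)$. On the right I would split cases: if $p_Z\ge R$, then any $A$ with $p_A\ge p_Z$ lies in $[R,\infty)$ and $\cost(A)\ge\cost(Z)$; if instead $p_Z<R$, then $p_Z\in[L,R]$, so $Z$ already attains the global minimum of $\cost$ and we are done. Finally, since $Y$ and $Z$ are the alternatives immediately to either side of $X$, every alternative other than $X$ has position $\le p_Y$ or $\ge p_Z$, and therefore cost at least $\min\{\cost(Y),\cost(Z)\}$. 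Together this shows $\cost(A)\ge\min\{\cost(X),\cost(Y),\cost(Z)\}$ for all $A\in M$, so the optimum is one of $X,Y,Z$. The case $p_{i'}=R$ follows by mirroring, and when $X$ is an extreme alternative the corresponding neighbour simply does not occur.

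I expect the only delicate point to be the even-$n$ case: there the minimizer of $\cost$ is a genuine interval $[L,R]$ and $i'$ sits at one endpoint, so one cannot conclude outright that $X$ itself is optimal, and the outer neighbour ($Z$, when $p_{i'}=L$) need not lie beyond $R$. The dichotomy $p_Z\ge R$ versus $p_Z<R$ is exactly what handles this — either monotonicity still dominates everything past $Z$, or $Z$ has itself landed in the optimal interval. For odd $n$ the lemma is immediate, since then $L=R=p_{i'}$ is the unique minimizer and the alternative nearest $p_{i'}$, which is $X$, is optimal.
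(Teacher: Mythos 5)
Your proposal is correct and rests on the same core idea as the paper's proof: the one-dimensional social cost only improves as an alternative moves toward the median voter(s), so every alternative left of $Y$ is dominated by $Y$, every alternative right of $Z$ is dominated by $Z$, and nothing lies strictly between $Y$ and $Z$ other than $X$, leaving only $X,Y,Z$ as possible optima. The paper obtains this monotonicity by an explicit counting/exchange argument (at least $\frac{n}{2}$ agents sit on the relevant side of $X$ or $Z$), whereas you invoke the standard convexity of $t\mapsto\sum_{i}|p_i-t|$ and its median-interval minimizer $[L,R]$ — a packaging that also handles the even-$n$ tie interval explicitly — but the substance of the argument is the same.
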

\begin{proof}
Suppose that $X$ is to the left of the median voter. Then $X$ has $x \geq \frac{n}{2}$ agents to the right of it who prefer $X$ over $Y$. For these agents $i$, $d(i,Y) = d(i,X) + d(X,Y)$, while the remaining $n-x$ agents $i$ have $d(i,X) \leq d(i,Y) + d(X, Y)$. Thus, $\sum_{i\in N} d(i,X) \leq \sum_{i \in N} d(i,Y) - x\cdot d(X,Y) + (n-x) \cdot d(X,Y) \leq \sum_{i \in N} d(i,Y) $, which implies that the quality of $X$ is always at least as good as $Y$. This same argument can be used for any alternative to the left of $X$.

We observe that if $X$ is left of the median voter, $Z$ must be to the right of the median voter because if not, then the median voter would prefer $Z$ over $X$. Since at least $\frac{n}{2}$ agents to the left of $Z$ prefer it over any alternative to the right of it, we can use the same argument to show that $Z$ is better than all of these alternatives. Thus, $X$ or $Z$ must be optimal.

Finally, if $X$ to the right of the median voter, we can show that $X$ or $Y$ must be the optimal alternative. However, since it is not possible to determine if $X$ is to the left or right of the median voter, then we know that one of $X$, $Y$, or $Z$ must be optimal.
\end{proof}

Next, we show that we can further reduce the set of possible optimal alternatives from 3 to 2.
%This is accomplished by deleting all alternatives except $X, Y, Z$ from $\sigma$ to create a new preference profile $\hat{\sigma}$. Then the number of votes received by $Y,Z$ in $\hat{\sigma}$ correspond to the values $|YX|,|ZX|$ from $\sigma$, respectively. Then we claim we can eliminate one of $Y,Z$ that has the fewest votes in $\hat{\sigma}$.

\begin{lem}
\label{lem:1euclidean_3to2}
If $|YX| \leq |ZX|$, then $Y$ cannot be better than $X$, and if $|ZX| \leq |YX|$, $Z$ cannot be better than $X$.
\end{lem}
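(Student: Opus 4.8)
The plan is to compare $\cost(Y)$ (or $\cost(Z)$) against $\cost(X)$ directly, using the 1-Euclidean structure of the instance, and show that the side-alternative with the smaller "defeat count" against $X$ cannot beat $X$. By symmetry (reflecting the line), it suffices to treat the case $|YX| \le |ZX|$ and show $\cost(X) \le \cost(Y)$; the other case follows by the mirror argument. Here $Y$ is the alternative immediately left of $X$ in $>_M$ and $Z$ the one immediately right, and $X$ is the top choice of the median agent $i'$.

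First I would recall from Lemma~\ref{lem:top3} (and its proof) the key geometric facts: every agent strictly between two consecutive alternatives in $>_M$ has her top choice among those two, and for an agent $i$ positioned so that $Y$ and $X$ lie on the same side of $i$, the farther of $Y,X$ costs exactly $d(i,\text{nearer}) + d(X,Y)$. So partition the agents by their position on the line relative to the segment $[Y,X]$: (a) agents at or left of $Y$'s position, who prefer $Y$ to $X$, i.e. $i \in YX$, and for whom $d(i,X) = d(i,Y) + d(X,Y)$; (b) agents at or right of $X$'s position, who prefer $X$ to $Y$, i.e. $i \in XY$, and for whom $d(i,Y) = d(i,X) + d(X,Y)$; (c) agents strictly between $Y$ and $X$, whose top choice is $Y$ or $X$ and who satisfy $d(i,Y) + d(i,X) = d(X,Y)$. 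Summing $d(i,X) - d(i,Y)$ over all agents and using these identities, $\cost(X) - \cost(Y) = |YX|\,d(X,Y) - |XY|\,d(X,Y) + \sum_{i \in (c)} (d(i,X) - d(i,Y))$; since each type-(c) term is at most $d(X,Y)$ in absolute value, and $|YX| + |XY| = n$ minus possible ties, the crude bound gives $\cost(X) - \cost(Y) \le (n - |XY|)\,d(X,Y) - |XY|\,d(X,Y) = (|YX| - |XY|)\,d(X,Y)$ — wait, this needs care, which is exactly the obstacle below.

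The main obstacle is handling the between-agents (type (c)) and the ties cleanly, and in particular establishing that $|XY| \ge |YX|$ — i.e. that $X$ pairwise defeats or ties $Y$. Here is where the choice of $X$ as the \emph{median agent's} top preference is essential: since $Y$ is left of $X$ and the median agent prefers $X$ over $Y$, at least half the agents (the median and everyone whose position is at least the median's) prefer $X$ to $Y$, so $|XY| \ge n/2 \ge |YX|$. Thus every between-agent who prefers $X$ contributes a \emph{negative} term $d(i,X)-d(i,Y) \le 0$, and every between-agent who prefers $Y$ contributes at most $d(X,Y)$; folding the latter into the $|YX|$ bucket still leaves the total coefficient of $d(X,Y)$ at most $|XY|^- \!- |XY|^+ \le 0$ after one checks that $Y$-preferring between-agents number at most $|YX|$ overall relative to $X$-preferring agents' surplus. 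I would finish by writing $\cost(X) - \cost(Y) \le (|\{i : Y \succ_i X\}| - |\{i : X \succ_i X... \}|)d(X,Y) \le 0$ since $X$ pairwise defeats-or-ties $Y$; hence $Y$ is not better than $X$. The symmetric computation with $Z$ in place of $Y$ and "right" in place of "left" handles $|ZX| \le |YX|$, completing the proof. The delicate point to get right is precisely that the inequality $|YX| \le |ZX|$ is not actually what drives this lemma — it is the median property — so I would double-check whether the hypothesis is needed here or only to pick which of $Y,Z$ to keep in the algorithm; if the latter, I would state the lemma's proof as "for whichever of $Y,Z$ is on the minority side, the median argument shows $X$ defeats-or-ties it, hence it cannot beat $X$."
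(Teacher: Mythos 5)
Your plan has a genuine gap, and it is exactly at the point you flagged and then resolved the wrong way. Your argument ultimately rests on the claim that, because $X$ is the median agent's top choice, $X$ pairwise defeats-or-ties $Y$, and hence $\cost(X)-\cost(Y)\le(|YX|-|XY|)\,d(X,Y)\le 0$. That inequality is false: an agent in $XY$ who lies strictly between $Y$ and $X$, just to the right of their midpoint, contributes $d(i,X)-d(i,Y)$ that is only slightly negative, not $-d(X,Y)$, so you cannot charge a full $-d(X,Y)$ per $X$-preferring agent against the $+d(X,Y)$ charged per $Y$-preferring agent. Concretely, on a line put $Y$ at $0$, $X$ at $1$, with $\lfloor n/2\rfloor-1$ agents at $0$ and the remaining agents at $0.5+\epsilon$: then $X$ is the median agent's top choice and pairwise defeats $Y$, yet $\cost(Y)\approx 0.5\,n/2$ while $\cost(X)\approx 1.5\,n/2$, so $Y$ is strictly better. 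This situation is only excluded by the lemma's hypothesis (here $|ZX|=0<|YX|$ for any $Z$ to the right of $X$), which shows that $|YX|\le|ZX|$ is precisely what drives the lemma; pairwise defeat alone yields only $\cost(X)\le 3\,\cost(Y)$, not $\cost(X)\le\cost(Y)$. So your closing suggestion to replace the hypothesis by ``the median argument shows $X$ defeats-or-ties it'' does not work.

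The paper's proof makes no use of the median agent in this lemma; the median property is used only in Lemma~\ref{lem:top3} to narrow the candidates to $X,Y,Z$. Instead, the comparison is between the populations $YX$ and $ZX$ (not $YX$ versus $XY$): every agent $i\in ZX$ lies strictly to the right of $X$ (beyond the midpoint of $X$ and $Z$), and since $Y$ is to the left of $X$ this gives the exact identity $d(i,X)=d(i,Y)-d(X,Y)$, i.e.\ each such agent pays a full extra $d(X,Y)$ under $Y$. Each agent in $YX$ saves at most $d(X,Y)$ by the triangle inequality, and every remaining agent has $X$ as top choice so $d(i,X)\le d(i,Y)$. Summing, $\cost(X)\le\cost(Y)+\bigl(|YX|-|ZX|\bigr)d(X,Y)\le\cost(Y)$ under the hypothesis $|YX|\le|ZX|$. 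Replacing this population-swap idea for your majority-comparison step is what your proof needs.
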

\begin{proof}
Suppose, without loss of generality, that $|YX| \leq |ZX|$. Then, since all agents in $ZX$ must be to the right of $X$, we have that
\begin{align*}
\sum_{i \in N} d(i,X) &= \sum_{i \in ZX} (d(i,Y) - d(X,Y)) + \sum_{i \notin ZX} d(i,X) \\
&\leq \sum_{i \in ZX} (d(i,Y) - d(X,Y)) + \sum_{i \in X^*} d(i,Y) + \sum_{i \in YX} (d(i,Y) + d(X,Y)) \\
&= \sum_{i \in N} d(i,Y) - |ZX| d(X, Y) + |YX| d(X,Y) \\
&\leq \sum_{i \in N} d(i,Y).
\end{align*}
Note that $ZX$ and $YX$ are disjoint, since agents in $ZX$ must be to the right of $X$ and agents in $YX$ must be to the left of $X$. Because of this, the third transition above is an equality. Thus, we have shown that $Y$ cannot be better than $X$.
\end{proof}

Finally, we can use the $\alpha$-generalized proportional to squares mechanism on the restricted set of alternatives $X$ and one of $Y, Z$ to achieve a distortion of $1+\alpha$, which is tight since our lower bound example from Theorem \ref{thm:randomizedLowerBound} occurs in the 1-Euclidean setting. In the event that $|YX| = |ZX|$, then we can select $X$ with probability 1, since neither $Y$ nor $Z$ can be better than $X$.

\begin{thm}
In the 1-Euclidean setting, Algorithm \ref{alg:1euclidean} has distortion at most $1 + \alpha$, and thus has the best possible worst-case distortion.
\end{thm}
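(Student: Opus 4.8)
The plan is to use Lemmas~\ref{lem:top3} and~\ref{lem:1euclidean_3to2} to cut the candidate optima down to two alternatives, and then to run almost exactly the computation from the proof of Theorem~\ref{thm:gen_proportional_squares_dec}, except that the quantities playing the role of $|X^*|$ and $|Y^*|$ are the \emph{pairwise} counts $|XZ|$ and $|ZX|$ rather than the plurality counts. Assume without loss of generality that $|YX| < |ZX|$ (the case $|YX| > |ZX|$ is symmetric with $Y$ in place of $Z$, and when $|YX| = |ZX|$ Lemma~\ref{lem:1euclidean_3to2} shows $X$ is optimal, so $p(X)=1$ gives distortion $1$). By Lemma~\ref{lem:top3} the optimum is in $\{X,Y,Z\}$, and by Lemma~\ref{lem:1euclidean_3to2} it is not $Y$; hence it is $X$ or $Z$. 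Also, $X$ is the median agent's top choice and therefore a weak Condorcet winner, so $|XZ| \geq |ZX|$, and $n = |XZ| + |ZX|$.

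Place $X$ at $0$ and $Z$ at $t = d(X,Z)$ on the line. Since $X$ and $Z$ are consecutive in the alternative ordering, no alternative lies strictly between them, and the first step is the geometric observation that exploits this: an agent in $XZ$ has position in $[0,t/2]$ and an agent in $ZX$ has position in $[t/2,t]\cup[t,\infty)\cup(-\infty,0]$-type ranges, but more to the point, any agent located strictly between $t/2$ and $t$ must have $Z$ as its \emph{top} choice (every alternative other than $Z$ is strictly farther from it), and symmetrically any agent strictly between $0$ and $t/2$ has $X$ as its top choice. Applying Lemma~\ref{lem:dec_agent_lb} to these agents, and noting that an agent in $ZX$ located at position $\geq t$ trivially has $d(i,X) \geq t$, we get $d(i,X) \geq \tfrac{1}{1+\alpha}d(X,Z)$ for \emph{every} $i \in ZX$, and symmetrically $d(i,Z) \geq \tfrac{1}{1+\alpha}d(X,Z)$ for every $i \in XZ$.

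With these bounds, I will split the agents by position ($\leq 0$, in $(0,t/2)$, in $(t/2,t)$, $\geq t$), write $\cost(Z)$ exactly as $\cost(X)$ plus a sum of signed contributions over these groups, and use the inequalities above on the $(0,t)$ terms to obtain $$\cost(Z) \le \cost(X) + \Big(n - \tfrac{2}{1+\alpha}|ZX|\Big)d(X,Z),\qquad \cost(X) \le \cost(Z) + \Big(n - \tfrac{2}{1+\alpha}|XZ|\Big)d(X,Z),$$ together with the lower bounds $\cost(X) \ge \tfrac{1}{1+\alpha}|ZX|\,d(X,Z)$ and $\cost(Z) \ge \tfrac{1}{1+\alpha}|XZ|\,d(X,Z)$ (these are the analogues of Ineq.~(\ref{eq.1}) and Ineq.~(\ref{eq.2}) in the proof of Lemma~\ref{lem:dist_bound_dec}, with pairwise counts in place of plurality counts; the relevant coefficient $n-\tfrac{2}{1+\alpha}|ZX|$ is nonnegative because $X$ is a weak Condorcet winner, and in the case $Z$ optimal one checks similarly that $n-\tfrac{2}{1+\alpha}|XZ|\geq 0$ unless $t=0$, where the distortion is trivially $1$). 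Then, if $X$ is optimal, the distortion is $1 + p(Z)\tfrac{\cost(Z)-\cost(X)}{\cost(X)} \le 1 + \tfrac{(1+\alpha)p(Z)(n - \frac{2}{1+\alpha}|ZX|)}{|ZX|}$, and if $Z$ is optimal it is $\le 1 + \tfrac{(1+\alpha)p(X)(n - \frac{2}{1+\alpha}|XZ|)}{|XZ|}$. Since $p(X)$ and $p(Z)$ in Algorithm~\ref{alg:1euclidean} are precisely the $\alpha$-generalized-proportional-to-squares probabilities for a two-alternative instance whose top-choice counts are $|XZ|$ and $|ZX|$, each expression simplifies to at most $1+\alpha$ by the same algebra as in the proof of Theorem~\ref{thm:gen_proportional_squares_dec}, the final step being $(|XZ|-|ZX|)^2 \geq 0$. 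The matching lower bound is immediate from Theorem~\ref{thm:randomizedLowerBound}, whose instance is $1$-Euclidean, so the distortion is exactly $1+\alpha$.

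The main obstacle is exactly why one cannot prove this by deleting all alternatives except $X$ and $Z$ and quoting Theorem~\ref{thm:gen_proportional_squares_dec} verbatim: deleting alternatives can destroy $\alpha$-decisiveness, since an agent whose true favorite is some third alternative may be only weakly decisive between $X$ and $Z$ once those are the only options. The geometric fact that consecutive alternatives on the line have no alternative between them is what lets us recover the needed per-agent bounds ($d(i,X)\geq\tfrac1{1+\alpha}d(X,Z)$ for $i\in ZX$, and its mirror) \emph{in the original instance}, and is the crux of the argument; everything after that is a bookkeeping re-run of the Theorem~\ref{thm:gen_proportional_squares_dec} calculation.
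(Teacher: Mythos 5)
Your proposal is correct and takes essentially the same route as the paper: reduce the possible optima to two adjacent candidates via Lemmas~\ref{lem:top3} and~\ref{lem:1euclidean_3to2}, then rerun the Theorem~\ref{thm:gen_proportional_squares_dec} algebra with the pairwise counts (e.g.\ $|XZ|,|ZX|$) in place of the plurality counts, citing Theorem~\ref{thm:randomizedLowerBound} for tightness. The only (harmless) difference is the middle step: you use the no-alternative-between-consecutive-candidates fact to get the per-agent bound $d(i,X)\geq\frac{1}{1+\alpha}d(X,Z)$ for every $i\in ZX$ (and its mirror) directly, whereas the paper's proof tracks the quantity $|YX\setminus Y^*|$ explicitly (using the same geometric fact, in the form that such agents lie beyond $Y$) and argues the distortion bound is maximized when that quantity is zero before invoking the same final computation.
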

\begin{proof}
Let $X, Y, Z$ be as defined in the algorithm. If $|YX|=|ZX|$, then by Lemma~\ref{lem:1euclidean_3to2} it must be that $X$ has better social cost than $Y$ or $Z$, and by Lemma \ref{lem:top3}, this means that $X$ must be the optimum outcome. Therefore, our algorithm selects $X$ with probability 1, and achieves distortion of 1.

Now suppose that $|YX| > |ZX|$, without loss of generality. By Lemma~\ref{lem:1euclidean_3to2} this means that $X$ has better social cost than $Z$ and that one of $X$ or $Y$ must be the optimum alternative.

Assume that $X$ is optimal instead of $Y$ (the proof of the other case is identical). Since we are in the 1-Euclidean setting, we know that every agent in $YX \setminus  Y^*$ is to the left of $Y$. Therefore, for all $i \in YX \setminus  Y^*$, $d(i, X) = d(i, Y) + d(X, Y)$. Using this fact, as well ad the definition of $\alpha$-decisiveness and the triangle inequality which states that $d(i,Y)\leq d(i,X)+d(X,Y)$, we can derive an improved upper bound on the social cost of $Y$:
\begin{align*}
&\quad \cost(Y) = \sum_{\i \in N} d(i,Y) \\
&= \sum_{i\in Y^*}d(i,Y) + \sum_{i\in YX \setminus Y^*} d(i,Y) + \sum_{i\in XY} d(i,Y) \\
&\leq \alpha\sum_{i \in Y^*} d(i,X) + \sum_{i \in YX \setminus  Y^*} (d(i,X) - d(X,Y)) + \sum_{i \in XY} (d(i,X) + d(X,Y))
\end{align*}

We continue to derive a better bound on the social cost of $Y$ from the above; the second inequality below is due to Lemma \ref{lem:dec_agent_lb}.

\begin{align*}
&\quad \cost(Y) \leq \\
&\leq \sum_{i \in N} d(i,X) - (1-\alpha) \sum_{i \in Y^*} d(i,X) + \left(|XY| - |YX \setminus Y^*| \right) d(X,Y) \\
&= \cost(X) - (1-\alpha) \sum_{i \in Y^*} d(i,X) +\left(|XY| - |YX \setminus Y^*| \right) d(X,Y) \\
&\leq \cost(X) - \frac{1-\alpha}{1+\alpha} \sum_{i \in Y^*} d(X,Y) + \left(|XY| - |YX \setminus Y^*| \right) d(X,Y) \\
&= \cost(X) + \left(|XY| - |YX \setminus Y^*| - \frac{1-\alpha}{1+\alpha}|Y^*| \right) d(X,Y) \\
&= \cost(X) + \left(n - \frac{2}{1+\alpha}|YX| - \frac{2\alpha}{1+\alpha}|YX \setminus Y^*| \right) d(X,Y).
\end{align*}
We can also derive an improved lower bound for the social cost of $X$, the last inequality below is again due to Lemma \ref{lem:dec_agent_lb}.
\begin{align*}
 \cost(X) &= \sum_{i \in Y^*} d(i,X) + \sum_{i \in YX \setminus  Y^*} d(i,X) + \sum_{i \in XY} d(i,X)  \\
&\geq  \sum_{i \in Y^*} d(i,X) + \sum_{i \in YX \setminus  Y^*} d(i,X) \\
&= \sum_{i \in Y^*} d(i,X) + \sum_{i \in YX \setminus  Y^*} (d(i,Y) + d(X,Y)) \\
&\geq \sum_{i \in Y^*} d(i,X) + \sum_{i \in YX \setminus  Y^*} d(X,Y) \\
&\geq \frac{1}{1+\alpha} |Y^*|d(X,Y) + |YX \setminus Y^*| d(X,Y) \\
&= \frac{1}{1+\alpha}\left(|YX| + \alpha|YX \setminus Y^*| \right) d(X,Y)
\end{align*}
We can now bound the distortion using these two inequalities. We will demonstrate that the distortion is maximized when there are no agents to the left of $Y$, i.e., $|YX \setminus Y^*|= 0 \Rightarrow |Y^*| = |YX|$. As we have seen, the distortion is also maximized when $\forall i \in |XY|, d(i,X) = 0$. Thus, we will have effectively reduced the problem to the case where $m = 2$.
\begin{align*}
&\quad \frac{p(Y)SC(Y) + p(X)SC(X)}{SC(X)}  \\
&\leq \frac{p(Y)\left(\cost(X) + \left(n - \frac{2}{1+\alpha}|YX| - \frac{2\alpha}{1+\alpha}|YX \setminus Y^*| \right) d(X,Y) \right) + p(X) \cost(X)}{\cost(X)} \\
&= 1+ \frac{p(Y)\left(n - \frac{2}{1+\alpha}|YX| - \frac{2\alpha}{1+\alpha}|YX \setminus Y^*| \right) d(X,Y)}{\cost(X)} \\
&\leq  1+ \frac{p(Y)\left(n - \frac{2}{1+\alpha}|YX| - \frac{2\alpha}{1+\alpha}|YX \setminus Y^*| \right) d(X,Y)}{\frac{1}{1+\alpha}\left(|YX| + \alpha|YX \setminus Y^*| \right) d(X,Y) } \\
&= 1 + \frac{p(Y)\left((1+\alpha)n - 2|YX| - 2\alpha|YX \setminus Y^*| \right)}{|YX| + \alpha|YX \setminus Y^*|}
\end{align*}
If we hold $|YX|$ constant, we observe that the distortion is decreasing in $|YX \setminus Y^*|$. In other words, the distortion is maximized when $|YX| = |Y^*|$. If we set $|YX \setminus Y^*| = 0$ in the above distortion bound, then the remainder of this proof proceeds identically as in the proof of Theorem~\ref{thm:gen_proportional_squares_dec}.\end{proof}
%Consider an agent $j \in |YX|$. If $j$ is to the left of $Y$, as $d(j, Y) \to \infty$, $d(j, X) \to \infty$ and the distortion approaches 1. It follows that if $j$ is left of $Y$, the distortion is maximized when $d(j,Y) = 0, d(j,X) = d(X,Y)$. However, we can increase the distortion by moving $j$ to the right of $Y$, as close to $X$ as possible, since this increases the cost of $Y$ and decreases the cost of $X$. This necessarily means that $j$ is in $Y^*$, since there are no alternatives between $X$ and $Y$.

%Consider an agent $j \in |XY|$. If $j$ is to the right of $X$, as $d(j,X) \to \infty$, $d(j,Y) \to \infty$ and the distortion approaches 1. The distortion will be maximized when $d(j, X) = 0, d(j,Y) = d(X,Y)$. Thus, $j$ can be treated as if it is in $|X^*|$.

%A similar argument can be used when $Y$ is optimal. %The argument for the case in which $X$ or $Z$ must be optimal is also similar.

% Not sure if true! Verify later.

%\textbf{Remark: } These results can hold in more general settings. We only require that preferences be single peaked and single crossing, which provide us with orderings on the alternatives and agents, respectively. Then our results hold for any metric which can induce this preference profile.

\subsection{Distortion in the $(m-1)$-Simplex}\label{sec:simplex}

In this section we consider a specialized, yet natural, setting inspired by \cite{anshelevich2014approximating}, known as the \emph{$(m-1)$-simplex setting}. In this setting, we assume that the $m$ alternatives are all at distance 1 from each other and for every agent $i$, for all $Y \in M$, we have that $d(i,Y) \leq 1$. This includes the case when $m$ alternatives are the vertices of the $(m-1)$-simplex and all of the agents lie inside this simplex. Although this is a very constrained setting, it is a reasonable assumption in the case when all of the alternatives are uncorrelated, i.e., when all the alternatives are equally different from one another. For example, when choosing where to allocate money, it may be that for a die-hard fan of alternative $X$, all other alternatives are equally bad. When choosing which of three services $X$, $Y$, or $Z$ to improve, someone who only uses service $X$ will have equally large cost for both alternative $Y$ or $Z$, since they do not benefit from them. On the other hand, a person using all three services equally may be indifferent between the alternatives. When the question is "Should we improve the highway system in New York, California, or Texas?", someone who lives in New York would want their roads improved, not someone else's, while someone who often visits all three states may be more indifferent between the alternatives.

In this setting, the distortion of randomized dictatorship does not improve because the worst case occurs on a line. However, we will see that plurality and the proportional to squares mechanism are good for any number of alternatives in this setting.

\begin{thm}
\label{thm:plurality}
If the $(m-1)$-simplex setting is $\alpha$-decisive, then plurality has distortion at most $1 + 2\alpha.$
\end{thm}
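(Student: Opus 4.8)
The plan is to bound the social cost of the plurality winner directly against that of the true optimum. Write $W=\argmax_{Y\in M}|Y^{*}|$ for the alternative selected by plurality, and let $X$ be an optimal alternative for the (unknown, $\alpha$-decisive) metric $d$; if $W=X$ the distortion is $1$, so assume $W\neq X$. The asymmetry I would exploit is that $W$ having the most first-place votes forces $|X^{*}|\leq|W^{*}|$, while $\alpha$-decisiveness controls how close the $W$-voters are to $W$.

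First I would upper-bound $\cost(W)=\sum_{i\in N}d(i,W)$ by splitting on $W^{*}$. For $i\in W^{*}$ the alternative $W$ is $i$'s first choice, and since $W\neq X$, $X$ is not $i$'s first choice, so $i$'s second choice $Z$ is ranked at least as high as $X$; then consistency of $\sigma$ with $d$ gives $d(i,Z)\leq d(i,X)$, and $\alpha$-decisiveness gives $d(i,W)\leq\alpha\,d(i,Z)\leq\alpha\,d(i,X)$. For $i\notin W^{*}$ the definition of the $(m-1)$-simplex gives the crude bound $d(i,W)\leq 1=d(W,X)$. Summing these, and using $\sum_{i\in W^{*}}d(i,X)\leq\cost(X)$ (the omitted summands are nonnegative), yields $\cost(W)\leq\alpha\,\cost(X)+(n-|W^{*}|)$.

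For the lower bound on $\cost(X)$ I would invoke Lemma~\ref{lem:dec_alternative_lb}, which in the simplex (where $d(X,Y)=1$ for every $Y\neq X$) reads $\cost(X)\geq\frac{1}{1+\alpha}\sum_{Y\neq X}|Y^{*}|=\frac{n-|X^{*}|}{1+\alpha}$; since $|X^{*}|\leq|W^{*}|$ this gives $n-|W^{*}|\leq(1+\alpha)\cost(X)$. Combining with the previous paragraph, $\cost(W)\leq\alpha\,\cost(X)+(1+\alpha)\cost(X)=(1+2\alpha)\cost(X)$, which is the claimed bound, and it holds for every $\alpha$-decisive $d$ consistent with $\sigma$.

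There is no real obstacle here --- it is a two-inequality argument --- but the step that explains why the bound is $1+2\alpha$, rather than the weaker $(1+\alpha)^{2}$ one obtains by separately maximizing $\cost(W)$ and minimizing $\cost(X)$, is to keep $\sum_{i\in W^{*}}d(i,X)$ symbolic and bound it by $\cost(X)$ rather than plugging in the pointwise estimate $d(i,X)\geq\frac{1}{1+\alpha}$ valid for $i\in W^{*}$: that estimate is already accounted for inside Lemma~\ref{lem:dec_alternative_lb}, so reusing it would be lossy. The only other point requiring care is the derivation $d(i,Z)\leq d(i,X)$ from the fact that $Z$ is ranked at least as high as $X$, which is exactly the contrapositive of the consistency condition and uses $W\neq X$.
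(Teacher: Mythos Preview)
Your proof is correct, and in fact cleaner than the paper's. Both arguments start from the same upper bound $\cost(W)\leq \alpha\delta+(n-|W^{*}|)$ with $\delta=\sum_{i\in W^{*}}d(i,X)$, but diverge in how $\delta$ is eliminated. The paper also places $\delta$ in the lower bound, writing $\cost(X)\geq \delta+\frac{1}{1+\alpha}(n-|X^{*}|-|W^{*}|)$, then argues that the resulting ratio is monotone decreasing in $\delta$ and substitutes the extremal value $\delta=\frac{|W^{*}|}{1+\alpha}$; after simplification it obtains $1+\frac{\alpha n}{n-|X^{*}|}$ and finishes via $|X^{*}|\leq |W^{*}|\Rightarrow |X^{*}|\leq n/2$. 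You instead absorb $\delta$ directly into $\cost(X)$ in the numerator and invoke Lemma~\ref{lem:dec_alternative_lb} once for the denominator, needing only $|X^{*}|\leq|W^{*}|$ rather than the coarser $|X^{*}|\leq n/2$. Your route avoids the monotonicity step entirely; the paper's route, by tracking $\delta$ on both sides, makes it slightly easier to read off which instances are tight.
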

\begin{proof}
Suppose $X$ is the optimal alternative, and $W$ is the alternative selected by plurality. For convenience, define $\delta = \sum_{i\in W^*} d(i,X)$.

By Lemma~\ref{lem:dec_agent_lb}, we know that $\cost(X) \geq \delta + \frac{1}{1+\alpha}\sum_{Y \neq X,W} |Y^*| d(X,Y)$, which equals $\delta + \frac{1}{1 + \alpha}\left(n - |X^*| - |W^*|\right)$ since in the simplex setting distances between all alternatives equal 1.
Furthermore, since in the simplex setting all distances $d(i,W)$ are at most one, we have that $\cost(W)\leq n - |W^*| + \sum_{i\in W^*} d(i,W) \leq n-|W^*|+\alpha\delta$; the last inequality is due to the definition of $\alpha$-decisiveness.

Putting this together, we have that the distortion is at most
$$\frac{\alpha\delta + (n-|W^*|)}{\delta + \frac{1}{1 + \alpha}\left(n - |X^*| - |W^*|\right)}.$$

This bound is decreasing with $\delta$ (since $n-|W^*| > \alpha\cdot \frac{1}{1 + \alpha}\left(n - |X^*| - |W^*|\right)$), and so is maximized for $\delta$ being as small as possible. By the triangle inequality, and the fact that $d(X,W)=1$ for simplex settings, we know that $1\leq d(i,W)+d(i,X)$ for each $i\in W^*$. Since $d(i,W)\leq \alpha\cdot d(i,X)$ by definition of decisiveness, this means that $1\leq (1+\alpha)d(i,X)$, and thus $\delta \geq    \frac{1}{1+\alpha}|W^*|$. Plugging this into the expression above, we obtain that the distortion is at most
\begin{align*}
\frac{\frac{\alpha}{1+\alpha}|W^*| + (n-|W^*|)}{\frac{1}{1+\alpha}|W^*| + \frac{1}{1 + \alpha}\left(n - |X^*| - |W^*|\right)}
&= \frac{(1+\alpha)n - |W^*|}{n-|X^*|} \\
&\leq \frac{(1+\alpha)n - |X^*|}{n-|X^*|} \\
&= 1 + \frac{\alpha \cdot n}{n - |X^*|}.
\end{align*}
Since $|X^*| \leq |W^*|$, it follows that $|X^*| \leq \frac{n}{2}$. Thus, $1 + \frac{\alpha \cdot n}{n - |X^*|} $, which is increasing in $|X^*|$, is maximized when $|X^*| = \frac{n}{2}$. We conclude that the distortion is at most $1 + 2\alpha$.
\end{proof}

Plurality, although a deterministic mechanism, does very well in this setting, because for $\alpha = 0$ the alternative with the most votes is clearly optimal. In general, as $\alpha \to 0$, the agents are forced closer to the vertices of the simplex, and plurality better approximates finding the optimal alternative. However, when $\alpha$ is not small, plurality fares poorly compared to the proportional to squares mechanism. Indeed, for $\alpha \leq \frac{1}{7}$, plurality has a better upper bound on distortion than proportional to squares, but otherwise the opposite is true. The difference becomes more obvious for high $\alpha$: for $\alpha = 1$ (i.e., for general metrics), proportional to squares has distortion at most 2, while plurality has distortion at most 3.

\begin{thm}
\label{thm:simplex_dec}
If the $(m-1)$-simplex setting is $\alpha$-decisive, the proportional to squares mechanism has distortion at most $\frac{1}{2}\left(1 + \alpha + \sqrt{2}\sqrt{\alpha^2 + 1} \right).$
\end{thm}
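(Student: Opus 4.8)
The plan is to transplant the argument used for plurality in Theorem~\ref{thm:plurality} into the proportional-to-squares setting, exploiting the two extra facts available in the $(m-1)$-simplex: $d(X,Y)=1$ for all distinct alternatives, and $d(i,W)\le 1$ for every agent $i$ and alternative $W$. Write $X$ for the optimal alternative and $\delta_Y=\sum_{i\in Y^*}d(i,X)$ for $Y\ne X$. First I would show, just as in the plurality proof, that $\cost(Y)\le (n-|Y^*|)+\alpha\delta_Y$ for every $Y\ne X$ (bound $d(i,Y)\le\alpha d(i,X)$ on $Y^*$ by $\alpha$-decisiveness and $d(i,Y)\le 1$ off $Y^*$), while $\cost(X)=\sum_{i\in X^*}d(i,X)+\sum_{Y\ne X}\delta_Y\ge\frac{1}{1+\alpha}\sum_{Y\ne X}|Y^*|$, using Lemma~\ref{lem:dec_agent_lb} both to get $\delta_Y\ge\frac{|Y^*|}{1+\alpha}$ and to lower-bound $\cost(X)$.

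Next, substitute $p(Y)=|Y^*|^2/S$ with $S=\sum_Z |Z^*|^2$ into $\dist_{\sum}(f,\sigma)=\sum_Y p(Y)\cost(Y)/\cost(X)$. In the resulting ratio each $\delta_Y$ has a nonnegative coefficient in the numerator and coefficient $1$ in the denominator; since the target value $D^*:=\frac12\bigl(1+\alpha+\sqrt2\sqrt{\alpha^2+1}\bigr)$ satisfies $D^*\ge 1+\alpha$ (because $\sqrt2\sqrt{\alpha^2+1}\ge 1+\alpha$, equivalently $(1-\alpha)^2\ge 0$), a short monotonicity check shows the ratio is largest when every $\delta_Y$ equals its minimum $\frac{|Y^*|}{1+\alpha}$ and $\sum_{i\in X^*}d(i,X)=0$. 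After this reduction the distortion is bounded by a quantity depending only on the vote counts:
\[
 q+\frac{(1+\alpha)n(1-q)-T}{\,n-|X^*|\,},\qquad q=\frac{|X^*|^2}{S},\quad T=\frac{\sum_{Y\ne X}|Y^*|^3}{S}.
\]

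It then remains to maximize this over all admissible vote-count vectors. Since $T$ enters with a minus sign, the relevant inequality is Cauchy--Schwarz, $\sum_{Y\ne X}|Y^*|^3\ge\bigl(\sum_{Y\ne X}|Y^*|^2\bigr)^2/\sum_{Y\ne X}|Y^*|$, which is tight precisely when the non-optimal alternatives share the same number of first-place votes; applying it expresses $T$ through $|X^*|$ and $S$ alone, and the change of variables $\tau=|X^*|/n$ and $w$ (the common loser share) reduces the bound to the two-parameter expression $\psi(w,\tau)=\dfrac{-w^2+(1+\alpha)w+\tau^2}{\tau^2+(1-\tau)w}$ on the region $0<w\le 1-\tau$. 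One then shows $\psi\le D^*$ there: the interior critical points can be checked to stay below $D^*$, the global maximizer lies on the boundary $w=1-\tau$, where $\psi$ collapses to $f(\tau)=\dfrac{\alpha+(1-\alpha)\tau}{2\tau^2-2\tau+1}$, and setting $f'(\tau)=0$ gives the quadratic $2(1-\alpha)\tau^2+4\alpha\tau-(1+\alpha)=0$. Its positive root $\tau^*=\frac12+\frac{1-\alpha}{4D^*}$ lies in the feasible interval $[\frac{1-\alpha}{2},1)$ forced by $X$ being optimal, and substituting it back yields $f(\tau^*)=\frac12\bigl(1+\alpha+\sqrt{(1+\alpha)^2+(1-\alpha)^2}\bigr)=\frac12\bigl(1+\alpha+\sqrt2\sqrt{\alpha^2+1}\bigr)=D^*$.

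I expect the main obstacle to be this last step. The catch is that $q$ and $T$ move together with the loser distribution, and not in a cooperative monotone way, so one cannot simply assert that the worst case has a single competing alternative; the honest route is a genuine constrained maximization of $\psi(w,\tau)$, where one must verify carefully that no interior critical point beats $D^*$ and that the boundary $w=1-\tau$ supplies the true maximizer at $\tau=\tau^*$. Everything else — the simplex-specific cost bounds, the $\delta_Y$ monotonicity reduction, and the final algebra of solving the quadratic and plugging $\tau^*$ back in — is routine, if somewhat fiddly.
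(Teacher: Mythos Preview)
Your reduction is correct and genuinely different from the paper's route. The paper never passes through Cauchy--Schwarz or a two-parameter optimization: it keeps the full vote-count vector, rewrites the distortion bound as a homogeneous polynomial inequality in the $|Y^*|$, and closes it with a sum-of-squares identity, using that $\bigl(\beta-\tfrac{2\alpha}{1+\alpha}\bigr)\bigl(\beta-\tfrac{2}{1+\alpha}\bigr)=1$ for $\beta=\tfrac{2D^*}{1+\alpha}$ (equivalently, $(D^*-1)(D^*-\alpha)=\tfrac{(1+\alpha)^2}{4}$). Your approach buys a clean two-variable target $\psi(w,\tau)$ at the cost of a harder final optimization; the paper's buys a one-line finish at the cost of guessing the right SOS decomposition.

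There is, however, a real gap in your last step. The assertion that the global maximizer of $\psi$ lies on the boundary $w=1-\tau$ is false. For fixed $\tau$, $\psi(\cdot,\tau)$ has a unique interior maximizer $w_{\mathrm{crit}}(\tau)>0$ (since $\psi\to 1$ as $w\to 0^+$ and $\psi\to-\infty$ as $w\to\infty$), and for small $\tau$ this lies strictly inside $(0,1-\tau]$. Already at $\alpha=1$ one has $\psi(w,\tau)=2-\dfrac{(w-\tau)^2}{\tau^2+(1-\tau)w}$, so the maximum over $w$ is at $w=\tau$, which is interior whenever $\tau<\tfrac12$, and there $\psi=2=D^*$ while the boundary value $\psi(1-\tau,\tau)$ is strictly smaller. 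The same phenomenon persists for $\alpha<1$: e.g.\ at $\alpha=0$, $\tau=\tfrac12$, the interior critical point $w\approx 0.207$ gives $\psi\approx 1.172$, while the boundary $w=\tfrac12$ gives $\psi=1$. So analyzing only $f(\tau)=\psi(1-\tau,\tau)$ does not bound $\psi$ everywhere, and your ``interior critical points stay below $D^*$'' is not something that can be waved through. (The side remark that optimality of $X$ forces $\tau\ge\tfrac{1-\alpha}{2}$ is also unsupported; no such constraint holds.)

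The cleanest fix is to abandon the calculus and prove $\psi(w,\tau)\le D^*$ directly: rearranged, it reads
\[
w^2+\bigl(D^*(1-\tau)-(1+\alpha)\bigr)w+(D^*-1)\tau^2\;\ge\;0,
\]
and the key identity $(D^*-1)(D^*-\alpha)=\tfrac{(1+\alpha)^2}{4}$ is precisely what makes this hold for all $w>0$, $\tau\in[0,1)$. At that point you are using the same algebraic fact that drives the paper's SOS proof, so the two arguments converge at the crucial step.
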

\begin{proof}
Suppose $X$ is the optimal alternative. For convenience, define $\delta_Y = \sum_{i\in Y^*} d(i,X)$ for any alternative $Y\neq X$. We begin by proceeding identically to the proof of Theorem \ref{thm:plurality}. Thus we obtain that, for all $Y\neq X$, it holds that
$$\frac{\cost(Y)}{\cost(X)}\leq \frac{\alpha\delta_Y + (n-|Y^*|)}{\delta_Y + \frac{1}{1 + \alpha}\left(n - |X^*| - |Y^*|\right)},$$
and furthermore that
$$\frac{\cost(Y)}{\cost(X)} \leq \frac{(1+\alpha)n - |Y^*|}{n-|X^*|}.$$
This is because until this point in the proof of Theorem \ref{thm:plurality}, we do not use anywhere that we are comparing $X$ with the outcome chosen by plurality; this comparison holds for the costs of arbitrary alternatives.

Now, let $\gamma = \left(n - |X^*|\right)  \cdot \sum_{Z \in M} |Z^*|^2$. If $p(Y)$ is the probability of alternative $Y$ being chosen by the proportional to squares mechanism, then the expected distortion of this mechanism is equal to
\begin{align*}
&\quad \frac{\sum_{Y \in M} p(Y)\cost(Y)}{\cost(X)} \\
&= p(X) + \sum_{Y\neq X}\frac{|Y^*|^2}{\cdot \sum_{Z \in M} |Z^*|^2} \left(\frac{\cost(Y)}{\cost(X)} \right) \\
&\leq p(X) + \sum_{Y\neq X}\frac{|Y^*|^2}{\cdot \sum_{Z \in M} |Z^*|^2} \left(\frac{(1+\alpha)n - |Y^*|}{n-|X^*|} \right) \\
&= p(X) + \frac{1+\alpha}{\gamma}\sum_{Y\neq X}\left(\frac{\alpha}{1+\alpha}|Y^*|^3 +  |Y^*|^2\sum_{Z\neq Y} |Z^*| \right) \\
&= \frac{1+\alpha}{\gamma} \left(\frac{1}{1+\alpha}|X^*|^2(n-|X^*|) + \sum_{Y\neq X}\left(\frac{\alpha}{1+\alpha}|Y^*|^3 +  |Y^*|^2\sum_{Z\neq Y} |Z^*| \right) \right) \\
&= \frac{1+\alpha}{\gamma} \left(\frac{1}{1+\alpha}|X^*|^2 \sum_{Y \neq X} |Y^*| + |X^*|\sum_{Y\neq X}|Y^*|^2 + \sum_{Y\neq X}\left(\frac{\alpha}{1+\alpha}|Y^*|^3 +  |Y^*|^2\sum_{Z\neq Y,X} |Z^*| \right) \right).
\end{align*}

%Part of the older proof

% Thus, we can upper-bound the distortion as follows:
%\begin{align*}
%&\quad p(X) + \sum_{Y \neq X} p(Y) \frac{\frac{\alpha}{1+\alpha}|Y^*| + \sum_{Z\neq Y} |Z^*|}{\cost(X)} \\
%&= \frac{|X^*|^2}{\sum_{Z\in M} |Z^*|^2} + \sum_{Y \neq X} \frac{|Y^*|^2}{\sum_{Z \in M} |Z^*|^2} \frac{\frac{\alpha}{1+\alpha}|Y^*| + \sum_{Z\neq Y} |Z^*|}{\cost(X)} \\
%&=\frac{|X^*|^2}{\sum_{Z\in M} |Z^*|^2} + \frac{1}{\cost(X) \sum_{Z\in M} |Z^*|^2} \sum_{Y \neq X} \left(\frac{\alpha}{1+\alpha}|Y^*|^3 + |Y^*|^2\sum_{Z \neq Y} |Z^*|\right) \\
%&\leq \frac{|X^*|^2}{\sum_{Z\in M} |Z^*|^2} + \frac{1}{\frac{1}{1+\alpha}(n - |X^*|) \sum_{Z\in M} |Z^*|^2} \sum_{Y \neq X} \left(\frac{\alpha}{1+\alpha}|Y^*|^3 + |Y^*|^2\sum_{Z \neq Y} |Z^*|\right) \\
%&=  \frac{1}{\frac{1}{1+\alpha}(n - |X^*|) \sum_{Z\in M} |Z^*|^2} \left[\frac{1}{1+\alpha}|X^*|^2(n-|X^*|) +  \sum_{Y \neq X}\left(\frac{\alpha}{1+\alpha}|Y^*|^3 + |Y^*|^2\sum_{Z \neq Y} |Z^*|\right)\right].
%\end{align*}
Let $\beta = 1 + \frac{\sqrt{2}\sqrt{\alpha^2 + 1} }{\alpha+1} $. If we want to obtain the desired bound of $ \frac{1}{2}\left(1 + \alpha + \sqrt{2}\sqrt{\alpha^2 + 1} \right)$, we must show that
\begin{align*}
&\quad \frac{1}{1+\alpha}|X^*|^2 \sum_{Y \neq X} |Y^*| + |X^*|\sum_{Y\neq X}|Y^*|^2 + \sum_{Y\neq X}\left(\frac{\alpha}{1+\alpha}|Y^*|^3 +  |Y^*|^2\sum_{Z\neq Y,X} |Z^*| \right)\\
&\leq \frac{1}{2}\left(1 + \alpha + \sqrt{2}\sqrt{\alpha^2 + 1} \right)\frac{\gamma}{1+\alpha} \\
&= \frac{1}{2}\beta \left(n - |X^*|\right)  \sum_{Z \in M} |Z^*|^2 \\
&= \frac{1}{2}\beta \left(|X^*|^2 \sum_{Y\neq X} |Y^*| + \sum_{Y\neq X} \left(|Y^*|^3 + |Y^*|^2\sum_{Z\neq Y,X} |Y^*| \right) \right)
\end{align*}

We can further simplify this inequality by canceling terms on both sides. We note that $\frac{1}{2}\beta \geq 1 \geq \frac{1}{1+\alpha} \geq \frac{\alpha}{1+\alpha}$.  First, we consider an alternative $Y \neq X$. On the LHS, $|Y^*|^3$ terms have a coefficient of $\frac{\alpha}{1+\alpha}$, while on the RHS, they have a coefficient of $\frac{1}{2}\beta$. We subtract $\frac{\alpha}{1+\alpha}|Y^*|^3$ from both sides. Similarly, for $|Y^*|^2 \sum_{Z\neq Y,X} |Z^*|$ terms, we have a coefficient of 1 on the LHS and a coefficient of $\frac{1}{2}\beta$ on the RHS. We subtract $|Y^*|^2 \sum_{Z\neq Y,X} |Z^*|$ from both sides. We repeat this process for all $Y \neq X$.

Next we consider terms that contain $|X^*|$. We observe that neither side has $|X^*|^3$ terms. The term $|X^*|^2 \sum_{Y\neq X}|Y^*|$ has a coefficient of $\frac{1}{1+\alpha}$ on the LHS, while it has a coefficient of $\frac{1}{2}\beta$ on the RHS. Thus, we subtract $\frac{1}{1+\alpha}|X^*|^2 \sum_{Y\neq X}|Y^*|$ from both sides. Finally, we consider the term $|X^*|\sum_{Y\neq X} |Y^*|^2$. The LHS has this term with a coefficient of 1, while the RHS does not have this term at all. We note that this is the only term remaining on the LHS after cancellation.

After all of this canceling, this leaves us with needing to prove that
\begin{equation*}
|X^*|\sum_{Y \neq X} |Y^*|^2 \leq  \frac{1}{2}\sum_{Y\neq X}\left(\left(\beta - \frac{2\alpha}{1+\alpha} \right)|Y^*|^3 + \left(\beta -  \frac{2}{1+\alpha}\right)|Y^*||X^*|^2  + \left(\beta - 2 \right) |Y^*|^2 \sum_{Z\neq X,Y} |Z^*|  \right).
\end{equation*}

In fact, we will prove that
\begin{equation*}
|X^*|\sum_{Y \neq X} |Y^*|^2 \leq  \frac{1}{2}\sum_{Y\neq X}\left(\left(\beta - \frac{2\alpha}{1+\alpha} \right)|Y^*|^3 + \left(\beta -  \frac{2}{1+\alpha}\right)|Y^*||X^*|^2   \right),
\end{equation*}
which will imply the above inequality, since $\sum_{Y\neq X}\left(\beta - 2 \right) |Y^*|^2 \sum_{Z\neq X,Y} |Z^*|$ is always non-negative.

We observe that
\begin{align*}
&\quad \sum_{Y\neq X}\left(\left(\beta - \frac{2\alpha}{1+\alpha} \right)|Y^*|^3 + \left(\beta -  \frac{2}{1+\alpha}\right)|Y^*||X^*|^2  - 2|Y^*|^2|X^*| \right) \\
&= \sum_{Y\neq X}|Y^*|\left(\left(\beta - \frac{2\alpha}{1+\alpha} \right)|Y^*|^2 + \left(\beta -  \frac{2}{1+\alpha}\right)|X^*|^2  - 2|Y^*||X^*| \right),
\end{align*}
which we claim is non-negative. Proving that this quantity is non-negative completes our proof. First, we claim that it follows from simple algebra that the product of $\left(\beta -  \frac{2}{1+\alpha}\right)$ and $\left(\beta - \frac{2\alpha}{1+\alpha} \right)$ is 1. Thus, for any $Y \neq X$, we can show that
\begin{equation*}
\left(\beta - \frac{2\alpha}{1+\alpha} \right)|Y^*|^2 + \left(\beta -  \frac{2}{1+\alpha}\right)|X^*|^2 - 2|Y^*||X^*| = \left( |Y^*|\sqrt{\beta - \frac{2\alpha}{1+\alpha}} - |X^*|\sqrt{\beta -  \frac{2}{1+\alpha}} \right)^2 \geq 0,
\end{equation*}
which implies that the summation over these terms is non-negative as well.
% Part of the older proof
%We observe that
%\begin{align*}
%&\quad \sum_{Y\neq X}\left(\left(1 + \frac{\sqrt{2}\sqrt{\alpha^2 + 1} - 2\alpha}{\alpha+1} \right)|Y^*|^3 + \left(1 + \frac{\sqrt{2}\sqrt{\alpha^2 + 1} - 2}{\alpha+1} \right)|Y^*||X^*|^2 - 2|X^*||Y^*|^2  \right) \\
%&= \sum_{Y\neq X}|Y^*|\left(\left(1 + \frac{\sqrt{2}\sqrt{\alpha^2 + 1} - 2\alpha }{\alpha+1} \right)|Y^*|^2 + \left(1 + \frac{\sqrt{2}\sqrt{\alpha^2 + 1} - 2}{\alpha+1} \right)|X^*|^2 - 2|X^*||Y^*|  \right),
%\end{align*}
%which we claim is always non-negative due to the following observation: $\forall Y \neq X,$
%\begin{align*}
%& \left(1 + \frac{\sqrt{2}\sqrt{\alpha^2 + 1} - 2\alpha }{\alpha+1} \right)|Y^*|^2 + \left(1 + \frac{\sqrt{2}\sqrt{\alpha^2 + 1} - 2}{\alpha+1} \right)|X^*|^2 - 2|X^*||Y^*|   \\
%&= \left(\sqrt{1 + \frac{\sqrt{2}\sqrt{\alpha^2 + 1} - 2\alpha }{\alpha+1}} |Y^*| - \sqrt{1 + \frac{\sqrt{2}\sqrt{\alpha^2 + 1} - 2}{\alpha+1}}|X^*| \right)^2 \\
%&= \left(\sqrt{1 + \frac{\sqrt{2}\sqrt{\alpha^2 + 1} - 2}{\alpha+1}}|X^*| -  \sqrt{1 + \frac{\sqrt{2}\sqrt{\alpha^2 + 1} - 2\alpha }{\alpha+1}} |Y^*|\right)^2.
%\end{align*}
\end{proof}

Unlike all of the previous distortion bounds we have provided, this is the first that is not linearly increasing in $\alpha$. It increases slower than the distortion of plurality, which is $1 + 2\alpha$. For smaller values of $\alpha$, such as $\alpha = 0$, which is where plurality has the largest advantage over proportional to squares, the distortion of proportional to squares is still at most $\frac{1 + \sqrt{2}}{2} \approx 1.2071$, which is reasonably small. For $1 \geq \alpha \geq .5$, the values of $1 + \alpha$ and $\frac{1}{2}\left(1 + \alpha + \sqrt{2}\sqrt{\alpha^2 + 1} \right)$ are relatively close. Since we have $1+\alpha$ as a lower bound for all randomized mechanisms, this implies that proportional to squares is nearly optimal for sufficiently large values of $\alpha$. We suspect that the optimal mechanism in the $(m-1)$-simplex setting is in fact a modified version of $\alpha$-generalized proportional to squares that works for arbitrary $m$, and we think it should have a distortion upper bound of $1 + \alpha$.

\section{Median Agent Cost}

\subsection{General Metric Spaces}

%\begin{thm}
%Randomized dictatorship has unbounded distortion, even for $m = 2$.
%\end{thm}
%\begin{proof}
%Suppose $\frac{n}{2}+1$ agents prefer the optimal $X$ over $A$ such that $d(i, X) = 0$ and $d(i, A) = d(X, A)$. Then $Q(X) = 0$. The remaining $\frac{n}{2} - 1$ agents that prefer $A$ over $X$ have $d(i, A) = d(X, A)$ as well. Then if we select $A$ with non-zero probability, the distortion is unbounded.
%\end{proof}

In this section, we will examine the median objective function. In \cite{anshelevich2014approximating}, it was shown than no deterministic mechanism can achieve a worst-case distortion of better than 5, and that the Copeland mechanism achieves this bound. We begin this section by showing that randomized mechanisms have a general worst-case distortion lower bound of 3 rather than 5 like deterministic mechanisms.

\begin{thm}
\label{thm:median_random_lb}
For $m \geq 2$, the worst-case median distortion is at least $3$ for all randomized mechanisms.
\end{thm}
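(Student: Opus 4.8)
The plan is to exhibit one preference profile on $m=2$ alternatives for which no randomized rule beats distortion $3$; for larger $m$ the same construction works after appending the remaining alternatives at the bottom of every ranking and placing them arbitrarily far from everything, so that they are never optimal and never influence any median. Take $n$ odd, alternatives $X$ and $Y$, and let $\sigma$ be the profile in which $\frac{n-1}{2}$ agents rank $X$ first and $\frac{n+1}{2}$ rank $Y$ first. Fix any randomized rule $f$ and write $p=p(X)$, $1-p=p(Y)$ for the probabilities it outputs on $\sigma$. Since $m=2$, every metric consistent with $\sigma$ must respect the same split of the agents into those preferring $X$ and those preferring $Y$ (up to ties), and that is all the freedom the adversary needs.

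First I would handle $p>0$. Here I build a one-parameter family of metrics $d_R$, all consistent with $\sigma$ and all realized by points on a line, in which $Y$ is optimal but $X$ is arbitrarily bad: put $d(X,Y)=R$, place the $\frac{n-1}{2}$ agents who prefer $X$ on top of $X$, and place the $\frac{n+1}{2}$ agents who prefer $Y$ at distance $1$ from $Y$ on the segment toward $X$ (so each such agent has cost $1$ for $Y$ and $R-1\ge 1$ for $X$, and indeed prefers $Y$). Because more than half the agents sit at distance $1$ from $Y$, we get $\med(Y,d_R)=1$, while the analogous count gives $\med(X,d_R)=R-1$; thus $Y$ is the optimum and $\dist_{\med}(f,\sigma)\ge p(R-1)+(1-p)$, which is at least $3$ as soon as $R\ge 1+2/p$. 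Hence any rule putting positive probability on $X$ already has worst-case median distortion $\ge 3$ (in fact unbounded), so from now on $p=0$, i.e., $f$ outputs $Y$ with probability $1$ on $\sigma$.

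Now I would handle $p=0$ with a single metric $d_1$ in which $X$ is optimal but $Y$ costs exactly three times as much. Set $d(X,Y)=2$, place the $\frac{n-1}{2}$ agents who prefer $X$ at distance $1$ from $X$ on the side away from $Y$ (cost $1$ for $X$, cost $3$ for $Y$), place one agent who prefers $Y$ at the midpoint of $X$ and $Y$ (cost $1$ for each, the tie broken toward $Y$), and place the remaining $\frac{n-1}{2}$ agents who prefer $Y$ very far from both alternatives (cost far larger than $3$ for each, ties broken toward $Y$). Reading off the sorted cost vectors, the $\frac{n+1}{2}$-th smallest cost for $X$ is $1$ and the $\frac{n+1}{2}$-th smallest cost for $Y$ is $3$, so $X$ is optimal and $\dist_{\med}(f,\sigma)\ge \med(Y,d_1)/\med(X,d_1)=3$. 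Combined with the previous case, every randomized mechanism has worst-case median distortion at least $3$.

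The routine parts are verifying the triangle inequality (immediate, since every metric here is a set of points on a line) and checking that the stated medians are correct; the only bookkeeping that actually needs care is lining up the group sizes with the median convention so that the $\frac{n+1}{2}$-th smallest cost lands exactly on the intended value — this is why $n$ is taken odd, and why in $d_1$ the ``extra'' agent with small $X$-cost is placed at the midpoint rather than on top of $X$. The one genuine idea, rather than a calculation, is choosing an \emph{unbalanced} profile: with $Y$ the strict plurality winner the adversary can make $X$ catastrophically bad, which pins the mechanism to $Y$; and then the best the adversary can still do on the $X$-optimal side is exactly the factor $3$ that is forced whenever the optimum is the first choice of only a strict minority.
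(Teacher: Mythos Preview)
Your proof is correct and follows essentially the same two-step approach as the paper: choose an unbalanced two-alternative profile, first argue that any mechanism putting positive probability on the minority alternative has unbounded distortion, and then construct a second consistent metric in which the majority alternative has median cost three times that of the optimum. The only differences are cosmetic --- the paper uses an $(n-1)$ versus $1$ split with even $n$ and an $\epsilon$-limit to approach ratio $3$, whereas you use a $\frac{n-1}{2}$ versus $\frac{n+1}{2}$ split with odd $n$ and a tie at the midpoint to hit ratio $3$ exactly --- but the structure and the key idea are the same.
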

\begin{proof}
We must show there exists a preference profile such that for all randomized mechanisms, there always exists a metric space that induces the preference profile and where the distortion is at least $3$. We will consider a preference profile with two alternatives $W,X$ and $n$ agents. In this profile, there are $n-1$ agents that prefer $W$ over $X$, while the remaining agent prefers $X$ over $W$. We claim that no randomized mechanism can achieve distortion $< 3$ for all metric spaces that induce this profile.

First, we claim that there exist metric spaces where the distortion is unbounded if $X$ is picked with any positive probability. For example, suppose for all agents that prefer $W$ over $X$, $d(i, W) = 0, d(i,X) = 1$. The agent that prefers $X$ over $W$ has $d(i,W) = 1, d(i,X) = 0$. Thus, $\med(W) = 0$ and $\med(X) = 1$. Thus, we conclude that any randomized mechanism with $p(X) > 0$ for the given preference profile has unbounded worst-case distortion.

In order to complete our proof, we only need to consider randomized mechanisms that select $W$ with probability 1, given the aforementioned preference profile. We will show there exists a metric space in which the distortion is at least 3 for these mechanisms. Consider the following metric space: there are $\frac{n}{2}$ agents with $d(i, W) = \frac{1}{2}-\epsilon$ and $d(i, X) = \frac{1}{2}+\epsilon$. One agent has $d(i, W) = \frac{3}{2}, d(i, X) = \frac{1}{2}$. The remaining agents who prefer $W$ have $d(i, W) > 2, d(i, X) > 2$. Then $\med(X) = \frac{1}{2}+\epsilon$ and $\med(W) = \frac{3}{2}$. Since $\med(W)$ approaches $3 \cdot \med(X)$ as $\epsilon\rightarrow 0$, and $p(W) = 1$, the distortion approaches 3.
\end{proof}

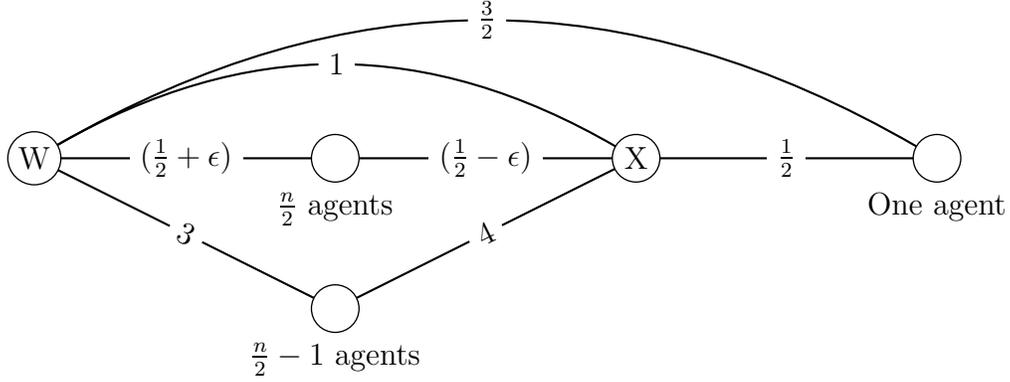
\begin{figure}
\centering
\begin{tikzpicture}[scale=1,transform shape]
  \Vertex[x=0,y=0,]{W}
  \Vertex[x=8,y=0]{X}
  \Vertex[x=4,y=0,LabelOut=true,L=$\frac{n}{2}$ agents,Lpos=270]{i}
   \Vertex[x=12,y=0,LabelOut=true,L=One agent,Lpos=270]{j}

      \Vertex[x=4,y=-2,LabelOut=true,L=$\frac{n}{2} - 1$ agents,Lpos=270]{k}
  \tikzstyle{LabelStyle}=[fill=white,sloped]
  \Edge[label=$(\frac{1}{2}+\epsilon)$](W)(i)
  \Edge[label=$(\frac{1}{2}-\epsilon)$](i)(X)
  \Edge[label=$\frac{1}{2}$](j)(X)
  \Edge[label=$4$](k)(X)
  \Edge[label=$3$](k)(W)
  \tikzstyle{EdgeStyle}=[bend left]
  \Edge[label=$1$](W)(X)

  \tikzstyle{EdgeStyle}=[bend right]
    \Edge[label=$\frac{3}{2}$](j)(W)
\end{tikzpicture}
\caption{There are $\frac{n}{2}$ agents between $W$ and $X$ that prefer $W$, one agent to the right of $X$ that prefers $X$, and $\frac{n}{2} - 1$ agents who are far from both alternatives but prefer $W$. If a social choice function selects $W$ with probability 1, then the worst-case median distortion is at least 3.}
\label{fig:medianLowerBound}
\end{figure}

From this example, we are able to conclude that both randomized dictatorship and proportional to squares have unbounded distortion, even for $m=2$.

We now present the main result of this section: designing a randomized mechanism which will always achieve a distortion of at most 4 for the median objective. We claim that to design a randomized mechanism for the median objective, it makes sense to consider the uncovered set, which is the set of alternatives $X$ that pairwise defeats every other alternative $Y$ either directly (i.e. $X$ pairwise defeats $Y$) or indirectly through another alternative $Z$ (i.e. $X$ does not pairwise defeat $Y$, but $X$ pairwise defeats $Z$, which in turn pairwise defeats $Y$).
From \cite{anshelevich2014approximating}, we have the following two lemmas concerning the quality of alternatives in the uncovered set.

\begin{lem}[\cite{anshelevich2014approximating}]
\label{lem:median_condorcet}
If an alternative $W$ pairwise defeats (or pairwise ties) the alternative $X$, then $\med(W) \leq 3 \cdot \med(X)$ for all metric preferences.
\end{lem}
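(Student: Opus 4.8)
The plan is to combine the triangle inequality with a simple counting argument over the agents. Write $m^{*} = \med(X)$ and let $Q = \{i \in N : d(i,X) \le m^{*}\}$; by the definition of the median, at least half the agents belong to $Q$. The hypothesis that $W$ pairwise defeats (or ties) $X$ says that $S = \{i \in N : W \succ_{i} X\}$ also contains at least half the agents, and since $\sigma$ is consistent with $d$, every $i \in S$ satisfies $d(i,W) \le d(i,X)$.

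First I would isolate a single ``anchor'' agent $j \in Q \cap S$. A short counting argument shows this intersection is nonempty: $|Q|$ and $|S|$ are each at least $n/2$, and one checks (paying attention to parity and to the precise convention used for the median) that they cannot be disjoint --- in particular a pairwise tie forces $n$ to be even, and at least $\lceil n/2 \rceil$ agents have cost at most $m^{*}$. For this anchor we have $d(j,W) \le d(j,X) \le m^{*}$. Then I would observe that for every agent $i \in Q$, two applications of the triangle inequality give
\begin{equation*}
d(i,W) \;\le\; d(i,X) + d(X,j) + d(j,W) \;\le\; m^{*} + m^{*} + m^{*} \;=\; 3m^{*},
\end{equation*}
using $d(i,X) \le m^{*}$ and $d(j,X) \le m^{*}$ because $i,j \in Q$, and $d(j,W) \le m^{*}$ from the anchor bound. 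Since $|Q| \ge n/2$, at least half the agents incur cost at most $3m^{*}$ for $W$, which is exactly $\med(W) \le 3m^{*} = 3\,\med(X)$.

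The substantive step is the two-hop bound that routes $d(i,W)$ through the anchor $j$, turning ``$W$ is not much worse than $X$ for a majority'' into a bound on $W$'s median. The only delicate point is the counting that guarantees $Q \cap S \neq \emptyset$ under the exact definition of the median (the boundary case of a pairwise tie); once an anchor agent is in hand, the rest is immediate from the triangle inequality.
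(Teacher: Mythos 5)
This lemma is not proved in the present paper at all---it is imported from \cite{anshelevich2014approximating}---so I compare your argument with the standard proof from that reference. Your anchor idea is sound: if some agent $j$ both prefers $W$ to $X$ and satisfies $d(j,X)\le \med(X)$, then $d(j,W)\le d(j,X)\le \med(X)$, and routing any agent $i$ with $d(i,X)\le\med(X)$ through $j$ gives $d(i,W)\le d(i,X)+d(X,j)+d(j,W)\le 3\med(X)$, so a suitable half of the agents certifies $\med(W)\le 3\med(X)$. The genuine gap is exactly the point you wave at: your counting does not actually produce the anchor in the pairwise-tie case. There $n$ is even, $|S|=n/2$, and you only claim $|Q|\ge\lceil n/2\rceil=n/2$; then $|Q|+|S|\ge n$ does not force $Q\cap S\neq\emptyset$, and disjointness is not a phantom worry, since the agents preferring $W$ tend to be precisely the agents far from $X$. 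Indeed, if the median for even $n$ is read so that only $n/2$ agents are guaranteed to lie in $Q$ (the $\tfrac n2$-th smallest distance), the lemma itself is false: put $X$ at $0$ and $W$ at $1$ on a line, one agent at $0$ and one at $0.9$; the preferences tie, $\med(X)=0$ but $\med(W)=0.1$. So the statement only makes sense under the convention that $\med$ for even $n$ is the $(\tfrac n2+1)$-th smallest distance (the convention the paper needs elsewhere, e.g.\ in its simplex median bound), and under that convention $|Q|\ge\tfrac n2+1$, hence $|Q|+|S|\ge n+1$ and the anchor exists. The same correction is also what legitimizes your final step: concluding $\med(W)\le 3\med(X)$ then requires $\tfrac n2+1$ agents with $d(i,W)\le 3\med(X)$, not just $n/2$. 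In short: right idea, but the ``one checks'' sentence is precisely where the proof currently fails, and the fix is to pin down the median convention so that $|Q|$ is a strict majority.

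For comparison, the proof in \cite{anshelevich2014approximating} avoids the intersection-of-two-majorities argument entirely: since at least half the agents have $d(i,W)\le d(i,X)$, for each such agent $d(W,X)\le d(i,W)+d(i,X)\le 2d(i,X)$, whence $\med(X)\ge d(W,X)/2$ (using the same median convention at the tie boundary); then the pointwise bound $d(i,W)\le d(i,X)+d(W,X)$ over all agents yields $\med(W)\le\med(X)+d(W,X)\le 3\med(X)$. That route needs only one counting statement rather than a nonempty intersection of two half-sets, which is why it sidesteps the parity issue your anchor construction runs into.
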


\begin{lem}[\cite{anshelevich2014approximating}]
\label{lem:median_uncovered}
If an alternative $W$ is in the uncovered set, then $\med(W) \leq 5\cdot \med(X)$ for all metric preferences, where $X$ is any alternative.
\end{lem}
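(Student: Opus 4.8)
The plan is to split on whether $W$ already weakly beats $X$ in pairwise comparison, and in the remaining case to exploit the intermediary alternative guaranteed by membership in the uncovered set. Fix an arbitrary alternative $X$ and set $m=\med(X)$. Two elementary observations do all the work. First, by consistency of $\sigma$ with $d$, whenever an alternative $A$ pairwise defeats an alternative $B$, strictly more than $n/2$ agents $i$ satisfy $d(i,A)\le d(i,B)$ (since $d(i,B)<d(i,A)$ would force $B\succ_i A$). Second, by definition of the median, at least $\lceil n/2\rceil$ agents $i$ satisfy $d(i,X)\le m$. Consequently any two such ``majority-size'' sets of agents have nonempty intersection. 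If $W$ pairwise defeats or pairwise ties $X$, then Lemma~\ref{lem:median_condorcet} gives $\med(W)\le 3\med(X)\le 5\med(X)$ and we are done, so assume instead that $X$ pairwise defeats $W$.

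Since $X$ pairwise defeats $W$, $W$ does not pairwise defeat $X$; as $W$ is in the uncovered set there is an alternative $Z$ with $W$ pairwise defeating $Z$ and $Z$ pairwise defeating $X$. Intersecting the $(>n/2)$-size set $\{i: d(i,Z)\le d(i,X)\}$ with the $(\ge\lceil n/2\rceil)$-size set $\{i: d(i,X)\le m\}$ produces an agent $j$ with $d(j,Z)\le d(j,X)\le m$. Intersecting the $(>n/2)$-size set $\{i: d(i,W)\le d(i,Z)\}$ with $\{i: d(i,X)\le m\}$ produces an agent $k$ with $d(k,W)\le d(k,Z)$ and $d(k,X)\le m$. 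Two triangle inequalities, routed through $j$, bound $k$'s cost for $W$:
\begin{equation*}
d(k,W)\;\le\; d(k,Z)\;\le\; d(k,X)+d(X,j)+d(j,Z)\;\le\; m+m+m \;=\; 3m .
\end{equation*}
Now for each of the at least $\lceil n/2\rceil$ agents $i$ with $d(i,X)\le m$, a triangle inequality routed through $k$ gives
\begin{equation*}
d(i,W)\;\le\; d(i,X)+d(X,k)+d(k,W)\;\le\; m+m+3m \;=\; 5m ,
\end{equation*}
so at least half the agents have cost at most $5m$ for $W$, whence $\med(W)\le 5m = 5\,\med(X)$.

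I expect the one genuinely delicate point to be keeping both ``pivot detours'' anchored at $X$ rather than chaining naively through $Z$: the argument applies a radius-$m$ bound on each of the six hops appearing above (with $d(k,W)$ itself treated as $3m$), whereas a lazier chain $W\to Z\to X$ that reuses $\med(Z)\le 3m$ as a triangle-inequality radius would blow the constant up to $9$. A secondary nuisance is the parity bookkeeping: one must check that each intersection of two majority-size agent sets is nonempty, and that the final witness set $\{i: d(i,W)\le 5m\}$ has size at least $\lceil n/2\rceil$, which is exactly what is needed to upper-bound the median. Neither issue is serious, but both must be handled cleanly for the constant $5$ to come out.
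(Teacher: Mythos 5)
Your proof is correct. Note that the paper itself does not reprove this statement -- it is imported from \cite{anshelevich2014approximating} as a black box -- and your argument is essentially the standard one from that reference: split on whether $W$ already defeats or ties $X$ (the factor-3 lemma), otherwise use the two-hop path $W\to Z\to X$ guaranteed by the uncovered set, extract the pivot agents $j$ and $k$ from intersections of strict-majority sets with the half of the agents within $\med(X)$ of $X$, and keep every triangle-inequality hop anchored at $X$ so that effectively $d(X,W)\le 4\med(X)$ and hence $\med(W)\le 5\med(X)$, rather than chaining $\med(W)\le 3\med(Z)\le 9\med(X)$. Your handling of the majority-set intersections and of the median convention for even $n$ is also consistent with how the paper uses $\med$ elsewhere.
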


These two lemmas suggest that if we want to achieve distortion better than 5, we should not deterministically pick a single alternative from the uncovered set because we do not know of a way to ensure we do not pick an alternative that is a factor of 5 away. Indeed, this is what can happen with Copeland. Instead, we want to mix over the entire uncovered set and ensure that some alternatives that pairwise defeat the optimal alternative (i.e., alternatives only a factor of 3 away) are chosen with high probability to decrease the distortion. However, since we do not know the optimal alternative, we must have this property hold for every alternative. This is made precise in the following theorem. Let $G = (M,E)$ be the majority graph, i.e., a graph in which the alternatives are vertices and the edges denote pairwise victories: an edge $(Y,Z) \in E$ if $Y$ is preferred to $Z$ by a strict majority of the voters. Let $S$ be the uncovered set, and $p$ be some probability distribution over $S$. Finally, define $\pi(Y)$ for any alternative $Y$ to be the total probability distribution ``covered" by $Y$, i.e., $\pi(Y)=\sum_{(Y,Z)\in E} p(Z)$. Then, we have the following statement.

\begin{thm}\label{thm:covers}
If a mechanism selects alternatives only from the uncovered set $S$ with probability distribution $p$, and if for all alternatives $X$ we have that $\pi(X)\leq \frac{1}{2}$, then the expected median distortion of this mechanism is at most 4.
\end{thm}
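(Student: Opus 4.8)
The plan is to fix a truly optimal alternative $X^\star$ (one with $\med(X^\star)=\min_{Y\in M}\med(Y)$) and to bound the expected median cost $\sum_{Y\in S} p(Y)\med(Y)$ against $\med(X^\star)$ by splitting the support of $p$ according to how each $Y$ compares with $X^\star$ in the majority graph $G$. The crucial observation is that the hypothesis $\pi(X)\le\tfrac12$ is asserted for \emph{every} alternative $X$, hence in particular for $X^\star$ --- and this matters precisely because $X^\star$ need not itself lie in $S$.

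First I would set $B=\{Y\in S:(X^\star,Y)\in E\}$, the alternatives in the support of $p$ that $X^\star$ pairwise defeats, and treat $B$ and $S\setminus B$ separately. For $Y\in S\setminus B$, the alternative $X^\star$ does not pairwise defeat $Y$; since each $\sigma_i$ is a total order, $|\{i:X^\star\succ_i Y\}|\le\tfrac n2$ forces $Y$ to pairwise defeat or pairwise tie $X^\star$, so Lemma~\ref{lem:median_condorcet} gives $\med(Y)\le 3\,\med(X^\star)$. For $Y\in B$, I would use only that $Y$ lies in the uncovered set $S$, so Lemma~\ref{lem:median_uncovered} gives the weaker bound $\med(Y)\le 5\,\med(X^\star)$ (this step needs the lemma in the form ``against an arbitrary alternative,'' since $X^\star$ may not be in $S$).

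Next I would note that $\sum_{Y\in B}p(Y)=\sum_{(X^\star,Z)\in E}p(Z)=\pi(X^\star)\le\tfrac12$, using that $p$ is supported on $S$, and that $\sum_{Y\in S\setminus B}p(Y)=1-\pi(X^\star)$ since $p$ is a probability distribution. Combining the two buckets,
\begin{align*}
\med(f(\sigma))&=\sum_{Y\in S\setminus B}p(Y)\med(Y)+\sum_{Y\in B}p(Y)\med(Y)\\
&\le 3\,\med(X^\star)\bigl(1-\pi(X^\star)\bigr)+5\,\med(X^\star)\,\pi(X^\star)\\
&=\bigl(3+2\pi(X^\star)\bigr)\med(X^\star)\le 4\,\med(X^\star),
\end{align*}
which is the desired distortion bound.

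There is no real obstacle in this argument: it is a two-bucket averaging in which the expensive factor-$5$ alternatives (those that $X^\star$ beats) are traded against the cheap factor-$3$ alternatives, and the threshold $\tfrac12$ is exactly calibrated so that $3+2\cdot\tfrac12=4$. The only points needing care are the implication ``$X^\star$ does not pairwise defeat $Y$'' $\Rightarrow$ ``$Y$ pairwise defeats or ties $X^\star$'' (so that Lemma~\ref{lem:median_condorcet} applies), and the use of Lemma~\ref{lem:median_uncovered} with $X^\star$ possibly outside $S$. The genuine difficulty of the theorem is not in this bound but in its intended companion: exhibiting a distribution $p$ on $S$ that actually satisfies $\pi(X)\le\tfrac12$ for all $X$.
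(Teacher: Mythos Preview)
Your proposal is correct and follows essentially the same approach as the paper: split the support of $p$ according to whether $X^\star$ pairwise defeats each alternative, apply Lemma~\ref{lem:median_condorcet} (factor $3$) and Lemma~\ref{lem:median_uncovered} (factor $5$) to the respective buckets, and use $\pi(X^\star)\le\tfrac12$ to average. The only cosmetic differences are that the paper breaks ties arbitrarily in $G$ (so every pair gets an edge) and separates $X^\star$ itself into a third bucket with coefficient $1$, whereas you handle ties explicitly via Lemma~\ref{lem:median_condorcet} and absorb $X^\star$ into the factor-$3$ bucket; neither change affects the bound.
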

\begin{proof}
Let $G = (M,E)$ be the majority graph in which ties are broken arbitrarily, and let $X$ be the optimal alternative. By Lemmas~\ref{lem:median_condorcet} and \ref{lem:median_uncovered}, we know the expected distortion is at most $$1\cdot p(X) + \sum_{Z\in S: (Z,X)\in E}3 \cdot  p(Z) + \sum_{Z\in S: (X,Z) \in E}5 \cdot p(Z).$$

Since $\pi(X)\leq \frac{1}{2}$, this means that $\sum_{Z\in S: (X,Z) \in E}  p(Z) \leq \frac{1}{2}$. Since the distortion can be at worst 5 with probability $\frac{1}{2}$ and otherwise has distortion at most 3, we conclude that the distortion of this mechanism is at most 4.
\end{proof}

Thus, we want a mechanism that manages to ensure that for every alternative $X$, the alternatives that can be more than a factor of 3 away from $X$ (i.e., the ones it pairwise defeats) are selected with probability at most $\frac{1}{2}$. The mechanism we describe, Uncovered Set Min-Cover, uses a linear program to accomplish this. We define the subset of the edges on the uncovered set as $E(S) = \{E = (Y,Z): Y\in S, Z\in S \}$. We also give the LP (and its dual which is not used by the algorithm, but is necessary for our proofs), which is used as a subroutine by our mechanism.

\begin{align*}
\text{(Linear Program)}&
&\text{(Dual)}& \\
\text{minimize} \quad p_{\max}&
&\text{maximize} \quad b_{\min}& \\
\text{subject to} \quad  p_Y &\geq 0, \quad Y \in S
&\text{subject to} \quad b_Y &\geq 0, \quad Y \in S \\
\sum_{(Y, Z) \in E(S)} p_{Z} &\leq p_{\max}, \quad Y\in S
&\sum_{(Z, Y) \in E(S)} b_{Z} &\geq b_{\min} , \quad Y \in S \\
\sum_{Z \in S} p_Z &= 1. \quad\
&\sum_{Z \in S} b_Z &= 1. \quad
\end{align*}

\begin{algorithm}[h!]
\label{alg:uncoverset}
\caption{Uncovered Set Min-Cover}
\begin{algorithmic}
\REQUIRE A preference profile $\sigma$
\ENSURE A probability distribution $\vec{p}$ over the alternatives of the uncovered set
\STATE $G = (M,E) \gets$ majority graph of $\sigma$
\STATE $S \gets$ uncovered set of $G$
\STATE $\vec{p} \gets$ solution to LP (see above)
\end{algorithmic}
\end{algorithm}

Now we must show that this mechanism actually has low distortion, i.e., the following theorem.

\begin{thm}
\label{thm:uncoveredset_distortion}
The expected median distortion of Uncovered Set Min-Cover is at most $ 4$.
\end{thm}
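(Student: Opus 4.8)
The plan is to derive the bound directly from Theorem~\ref{thm:covers}. That theorem asks two things of the distribution $\vec p$ returned by Uncovered Set Min-Cover: that $\vec p$ is supported on the uncovered set $S$ (true by construction), and that $\pi(X)=\sum_{(X,Z)\in E}p(Z)\le\tfrac12$ for \emph{every} alternative $X$, where $G=(M,E)$ is the majority tournament (ties broken arbitrarily) that the algorithm builds. Since $\vec p$ is supported on $S$, for $Y\in S$ we have $\pi(Y)=\sum_{(Y,Z)\in E(S)}p_Z\le p_{\max}$, the optimal LP value. So the whole argument reduces to two claims: (i) the optimum of the LP satisfies $p_{\max}\le\tfrac12$; and (ii) the inequality $\pi(X)\le p_{\max}$ extends to alternatives $X\notin S$ as well.

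For (i) I would invoke strong LP duality. The primal is feasible (the uniform distribution on the nonempty set $S$ is a feasible point) and bounded below by $0$, so its optimum equals the dual optimum $b_{\min}$; it therefore suffices to show that every feasible dual solution $\vec b$ — i.e. every probability distribution on $S$ — has $\min_{Y\in S}\sum_{(Z,Y)\in E(S)}b_Z\le\tfrac12$. The key step is the inequality
\[
\sum_{Y\in S}b_Y\sum_{(Z,Y)\in E(S)}b_Z \;=\; \sum_{(Z,Y)\in E(S)}b_Zb_Y \;\le\; \sum_{\{Z,Y\}\subseteq S,\,Z\neq Y}b_Zb_Y \;=\; \tfrac12\Bigl(1-\sum_{Y\in S}b_Y^2\Bigr)\;\le\;\tfrac12,
\]
where the middle inequality holds because for each unordered pair $\{Z,Y\}$ the tournament $E(S)$ contributes at most one of the two ordered edges. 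Since $\sum_{Y\in S}b_Y=1$, averaging produces some $Y^\ast\in S$ with $\sum_{(Z,Y^\ast)\in E(S)}b_Z\le\tfrac12$, so $b_{\min}\le\tfrac12$ and hence $p_{\max}\le\tfrac12$. (Conceptually this is just the statement that the symmetric zero-sum ``tournament game'' on $S$ has value $0$, which is precisely why mixing over the uncovered set forces every alternative to cover at most half the probability mass.)

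For (ii) I would use that the uncovered set is closed downward under covering in $G$: the covering relation on $G$ is a strict partial order, so if $X\notin S$ then $X$ is covered by some alternative, and taking a maximal such alternative $W$ in the covering order forces $W$ itself to be uncovered, i.e. $W\in S$. Because $W$ covers $X$, every alternative pairwise beaten by $X$ in $G$ is also beaten by $W$, so $\{Z:(X,Z)\in E\}\subseteq\{Z:(W,Z)\in E\}$; intersecting with $S$ and summing $\vec p$ yields $\pi(X)\le\pi(W)\le p_{\max}\le\tfrac12$. Together with the $Y\in S$ case this gives $\pi(X)\le\tfrac12$ for all $X\in M$, so Theorem~\ref{thm:covers} applies and the expected median distortion is at most $4$.

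I expect the duality step to be the main obstacle: recognizing that the LP computes exactly the value of the tournament game on $S$, and that the quadratic averaging identity above pins that value at $\tfrac12$, is the real content; the reduction through Theorem~\ref{thm:covers}, primal feasibility, and the covering-closure property of the uncovered set are comparatively routine. The only subtlety is pairwise ties in the underlying majority relation, but these only help — a tie contributes no edge to $E(S)$, which can only decrease both the relevant values $\pi(\cdot)$ and the left-hand side of the displayed inequality.
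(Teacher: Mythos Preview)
Your proposal is correct and follows essentially the same route as the paper: invoke Theorem~\ref{thm:covers}, prove $p_{\max}\le\tfrac12$ via LP duality and the quadratic averaging identity (the paper's Lemma~\ref{lem:prob_mass}, phrased as a contradiction rather than a direct average), and extend $\pi(X)\le\tfrac12$ to $X\notin S$ by climbing the covering order into $S$ (the paper's Lemma~\ref{lem:outsideOfUncovered}, phrased as an iterative chain $W_1,W_2,\dots$ rather than ``take a covering-maximal element''). The only cosmetic difference is that the paper breaks ties in $G$ to make it a tournament throughout, whereas you note ties can simply be dropped; both work.
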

%\begin{proof}
%Let $G = (M,E)$ be the majority graph in which ties are broken arbitrarily, and let $X$ be the optimal alternative. By Lemmas~\ref{lem:median_condorcet} and \ref{lem:median_uncovered}, we know the distortion is at most $$1\cdot p(X) + \sum_{Z\in S: (Z,X)\in E}3 \cdot  p(Z) + \sum_{Z\in S: (X,Z) \in E}5 \cdot p(Z).$$
%
%As we will see in Lemma~\ref{lem:prob_mass}, $b_0 \leq \frac{1}{2} $, which implies that $a_0 \leq b_0 \leq \frac{1}{2}$. Thus, we know that $\sum_{Z\in S: (X,Z) \in E}  p(Z) \leq \frac{1}{2}$. Since the distortion can be at worst 5 with probability at most $\frac{1}{2}$ and otherwise has distortion at most 3, we conclude that the distortion of Uncovered Set Min-Cover is 4.
%\end{proof}

This theorem is immediate from Theorem \ref{thm:covers} if we can show that for the distribution formed by Uncovered Set Min-Cover, we have that $\pi(X)\leq \frac{1}{2}$ for all $X$. We prove this fact using the following two lemmas.

%We now prove the lemma used in the proof of Theorem~\ref{thm:uncoveredset_distortion} to show that the LP used as a subroutine in fact assigns probabilities in a manner that ensures that no alternative ``covers'' other alternatives in the uncovered set with too high probability.

\begin{lem}
\label{lem:prob_mass}
Let $G = (M,E)$ be the majority graph in which ties are broken arbitrarily. For the dual of LP,
%there does not exist $\vec{b}$ such that for all $Y \in S$, $\sum_{Z\in S:(Z, Y) \in E} b_{Z} > \frac{1}{2} $, i.e.,
it must be that $b_{\min} \leq \frac{1}{2}$.
\end{lem}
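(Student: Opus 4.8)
The goal is to show that every feasible solution $(b, b_{\min})$ of the dual LP satisfies $b_{\min}\le \frac{1}{2}$. First I would record two structural facts. The constraints $b_Y\ge 0$ and $\sum_{Z\in S} b_Z = 1$ say exactly that $(b_Z)_{Z\in S}$ is a probability distribution supported on the uncovered set $S$, and the remaining constraints say that for every $Y\in S$ the total $b$-mass of the alternatives in $S$ that pairwise defeat $Y$ is at least $b_{\min}$. The second fact is that, because the majority graph $G=(M,E)$ is formed by breaking all ties, its restriction to $S$ is a genuine tournament: for every pair of distinct $Y,Z\in S$, exactly one of $(Y,Z),(Z,Y)$ lies in $E(S)$.

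The plan is a short weighted double-counting argument. Starting from $b_{\min}=b_{\min}\sum_{Y\in S}b_Y$ and applying the domination constraint to each $Y$ gives
$$b_{\min}\;\le\;\sum_{Y\in S} b_Y \sum_{Z:\,(Z,Y)\in E(S)} b_Z\;=\;\sum_{(Z,Y)\in E(S)} b_Z b_Y.$$
Now I would invoke the tournament property: each unordered pair $\{Y,Z\}$ of distinct elements of $S$ contributes the single term $b_Y b_Z$ to this sum (whichever orientation happens to be the edge), so
$$\sum_{(Z,Y)\in E(S)} b_Z b_Y\;=\;\sum_{\{Y,Z\}:\,Y\ne Z} b_Y b_Z\;=\;\frac{1}{2}\Bigl(\bigl(\textstyle\sum_{Z\in S} b_Z\bigr)^2-\sum_{Z\in S} b_Z^2\Bigr)\;=\;\frac{1}{2}\Bigl(1-\sum_{Z\in S} b_Z^2\Bigr)\;\le\;\frac{1}{2}.$$
Chaining the two displays yields $b_{\min}\le\frac{1}{2}$, as required. (Equivalently, one can phrase this probabilistically: draw $Y,Z$ i.i.d.\ from $b$; the probability that $Z$ pairwise defeats $Y$ is at least $b_{\min}$ by the constraints, and at most $\frac{1}{2}$ by the symmetry of the two draws together with the tournament property.)

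I expect this proof to be short; the only points requiring care are that the restriction of the tie-broken majority graph to $S$ really is a tournament — which is what makes every unordered pair count exactly once and introduces the decisive factor $\frac{1}{2}$ — and that $b$ genuinely is a distribution on $S$ so that $\sum_{Z\in S} b_Z = 1$. It is worth remarking that no property of the uncovered set beyond $S$ being a vertex set of this tournament is actually used, so the same bound would hold for any support; the role of the uncovered set is only to guarantee (via Lemmas~\ref{lem:median_condorcet} and \ref{lem:median_uncovered}) the factor-$5$ fallback used in Theorem~\ref{thm:covers}. By LP strong duality, Lemma~\ref{lem:prob_mass} also gives $p_{\max}^{\star}=b_{\min}^{\star}\le\frac{1}{2}$ for the primal, which is the form in which it is combined with Theorem~\ref{thm:covers}.
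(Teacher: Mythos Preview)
Your proof is correct and follows essentially the same weighted double-counting argument as the paper: both bound $\sum_{(Z,Y)\in E(S)} b_Y b_Z$ and exploit the fact that the tie-broken majority graph restricted to $S$ is a tournament, so that each unordered pair contributes exactly once. The only cosmetic difference is that the paper argues by contradiction and obtains the bound via the reindexing $\sum_{Y} b_Y \sum_{(Z,Y)\in E(S)} b_Z = \sum_{Y} b_Y \sum_{(Y,Z)\in E(S)} b_Z$, whereas you proceed directly and close with the explicit identity $\sum_{\{Y,Z\}} b_Y b_Z = \tfrac{1}{2}\bigl(1-\sum_Z b_Z^2\bigr)\le\tfrac{1}{2}$.
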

\begin{proof}
Suppose, by way of contradiction, that for all $Y \in S$, $\sum_{(Z, Y) \in E(S)} b_{Z} > \frac{1}{2} $, which implies that $\sum_{(Y, Z) \in E(S)} b_{Z} < \frac{1}{2}$. Then we can derive that
\begin{align*}
\frac{1}{2}
&< \sum_{Y\in S} b_Y \sum_{(Z, Y) \in E(S)} b_{Z} \\
&= \sum_{Y\in S}\sum_{(Z, Y) \in E(S)} b_{Y} b_{Z} \\
&= \sum_{(Z, Y) \in E(S)} b_{Y} b_{Z} \\
&= \sum_{Y\in S} b_Y \sum_{(Y, Z) \in E(S)} b_{Z} \\
&< \frac{1}{2},
\end{align*}
which is a contradiction.
\end{proof}

Due to LP-Duality, the above lemma immediately implies that $p_{\max}\leq \frac{1}{2}$, and thus $\pi(X)\leq\frac{1}{2}$ for all $X\in S$. This does not complete the proof of Theorem \ref{thm:uncoveredset_distortion}, however, since it is possible that the optimal alternative $X$ is outside of the uncovered set $S$. To finish the proof of the theorem, we also need the following lemma.

\begin{lem}\label{lem:outsideOfUncovered}
Suppose we have a probability distribution $p$ over alternatives in the uncovered set $S$, and for all $Y\in S$, we have that $\pi(Y)=\sum_{(Y,Z)\in E} p(Z)\leq \frac{1}{2}$. Then, this also must hold for alternatives outside of $S$, i.e., for all $X\not\in S$, we also have that $\pi(X)\leq \frac{1}{2}$.
\end{lem}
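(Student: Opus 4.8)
The plan is to take an arbitrary alternative $X \notin S$ and show $\pi(X) = \sum_{(X,Z)\in E} p(Z) \leq \frac{1}{2}$, using the fact that $X$ is covered by some alternative in $S$. First I would recall the defining property of the uncovered set: since $X \notin S$, there is some $W \in S$ that covers $X$, meaning $W$ pairwise defeats $X$ and, for every $Z$ that $X$ pairwise defeats, $W$ also pairwise defeats $Z$. In terms of the majority graph $G=(M,E)$ (with ties broken arbitrarily, as in the algorithm), this says that whenever $(X,Z)\in E$ we also have $(W,Z)\in E$. The one subtlety is the tie-breaking: the uncovered set is defined with respect to pairwise defeats/ties, so I would want to be slightly careful that the covering relation I use is compatible with the particular orientation of $G$ chosen by the algorithm. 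I would argue that $W$ can be chosen so that its out-neighborhood in $G$ contains that of $X$; this follows because if $X$ is covered in the tie-tolerant sense then every strict majority victory of $X$ is also a strict majority victory of $W$, and $W$ at least ties $X$, so the algorithm's orientation can only add more out-edges to $W$ relative to $X$ — in any case $\{Z : (X,Z)\in E\} \subseteq \{Z : (W,Z)\in E\}$.

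Given that containment, the argument is short. The set of alternatives that $X$ defeats in $G$ is a subset of the set that $W$ defeats in $G$, so restricting to $S$ we get $\{Z \in S : (X,Z)\in E\} \subseteq \{Z\in S : (W,Z)\in E\}$. Since $p$ is supported entirely on $S$, we have
\begin{align*}
\pi(X) = \sum_{(X,Z)\in E} p(Z) = \sum_{Z\in S: (X,Z)\in E} p(Z) \leq \sum_{Z\in S: (W,Z)\in E} p(Z) = \pi(W).
\end{align*}
But $W \in S$, so by hypothesis $\pi(W) \leq \frac{1}{2}$, and therefore $\pi(X) \leq \frac{1}{2}$, as required.

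The main obstacle, as flagged above, is purely bookkeeping around ties: making sure that the "covering" alternative guaranteed by membership outside the uncovered set actually dominates $X$'s out-neighborhood in the specific arbitrarily-oriented graph $G$ used by the mechanism, rather than merely in the weaker tie-respecting sense. I expect this to be handled by a one-line observation that dropping to strict majorities only shrinks out-neighborhoods, and that ties between $X$ and $Z$ contribute nothing to $\pi(X)$ when oriented away from $X$ and are harmless (already counted in $\pi(W)\le\frac12$) when oriented toward $X$; so the containment of out-neighborhoods holds for the orientation actually used. Everything else is immediate from the support of $p$ lying in $S$ and the hypothesis applied to the covering vertex. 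Combining this lemma with Lemma~\ref{lem:prob_mass} and LP-duality (which give $\pi(X)\le\frac12$ for $X\in S$) establishes $\pi(X)\le\frac12$ for all $X\in M$, and then Theorem~\ref{thm:covers} yields Theorem~\ref{thm:uncoveredset_distortion}.
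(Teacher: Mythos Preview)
Your proof is correct and takes essentially the same approach as the paper: both locate an alternative $W\in S$ that covers $X$ and then use the resulting out-neighborhood containment to conclude $\pi(X)\le\pi(W)\le\frac{1}{2}$. The only difference is cosmetic---you invoke directly the standard fact (via transitivity of the covering relation) that any $X\notin S$ is covered by some $W\in S$, whereas the paper unrolls this into an explicit chain $W_1,W_2,\ldots,W_k\in S$ and argues by contradiction; your tie-breaking worries are no more of an issue than in the paper's own argument, which works cleanly once one interprets ``pairwise defeats'' with respect to the tournament $G$ actually used.
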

\begin{proof}
Consider an alternative $X\not\in S$, and let $U$ be the set of alternatives ``covered" by $X$ in $S$, i.e., that $X$ pairwise defeats. Suppose to the contrary that $\pi(X)>\frac{1}{2}$, i.e., that $\sum_{Z\in U} p(Z)>\frac{1}{2}$. Since $X$ is not in the uncovered set, there must be some alternative $W_1$ which pairwise defeats $X$ and all the alternatives in $U$ as well (if such $W_1$ did not exist then $X$ would defeat everyone either directly or in two hops, and thus would be in $S$). If $W_1\in S$, then $\pi(W_1)>\frac{1}{2}$ since it defeats all of $U$, leading us to a contradiction since all alternatives in $S$ cover less than half of the total probability mass. If, on the other hand, $W_1\not\in S$, then by the same argument there must be some alternative $W_2$ which defeats all of $U$, $X$, and $W_1$. We continue in this way until we obtain some alternative $W_k$ which must be in $S$, giving us the desired contradiction.
\end{proof}

This completes the proof of Theorem \ref{thm:uncoveredset_distortion}: by Lemma \ref{lem:prob_mass} we have that $\pi(X)\leq \frac{1}{2}$ for all $X\in S$, by Lemma \ref{lem:outsideOfUncovered} we have that this is true even for $X\not\in S$, and by Theorem \ref{thm:covers} we obtain the desired distortion bound.

\subsection{Median Distortion for 1-Euclidean and Simplex Metrics}
We complete this section by considering special metric spaces. In the case of 1-Euclidean, we are trivially able to obtain the optimal mechanism: selecting the Condorcet winner, since such a winner is guaranteed to exist for the 1-Euclidean setting. By Lemma~\ref{lem:median_condorcet}, we know that the Condorcet winner is guaranteed to be within a factor of 3 of the optimal alternative.

%\begin{thm}
%In the 1-Euclidean setting, any mechanism that satisfies the Condorcet criterion has distortion at most 3.
%\end{thm}
%\begin{proof}
%In the 1-Euclidean setting, a Condorcet winner is guaranteed to exist. Thus, by Lemma~\ref{lem:median_condorcet}, we know that the Condorcet winner is guaranteed to be within a factor of 3 of optimal.
%\end{proof}

In the $(m-1)$-simplex setting, we will see that almost any alternative is of high median quality. This is due to the fact that the alternatives are very spread out. Unless an alternative has at least $\frac{n}{2}$ agents very close to it, its median cost is guaranteed to be at least $\frac{1}{2}$ in general metric spaces. The following result shows than any mechanism which selects an alternative preferred by more than $\frac{n}{2}$ voters as their top choice (if one exists) will have low median distortion. Thus, for example, plurality is a good mechanism for this setting.

\begin{thm}
\label{thm:median_simplex_dec}
If the $(m-1)$-simplex setting is $\alpha$-decisive, any mechanism that satisfies the majority criterion has median distortion at most $1+\alpha$.
\end{thm}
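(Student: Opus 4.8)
The plan is to split into two cases depending on whether a majority winner exists. First, suppose some alternative $X$ is the top choice of strictly more than $\frac{n}{2}$ voters; then any majority-consistent mechanism selects $X$ with probability $1$, so I only need to bound $\med(X)/\med(O)$, where $O$ is the optimal alternative. If $O = X$ the distortion is $1$, so assume $O \neq X$. Since $|X^*| > \frac{n}{2}$, strictly more than half the agents have $X$ as their first choice. For each such agent $i \in X^*$, $\alpha$-decisiveness gives $d(i,X) \le \alpha \cdot d(i, Z_i)$ where $Z_i$ is $i$'s second choice, and hence $d(i,X) \le \alpha \cdot d(i,O)$ (since $d(i,Z_i)\le d(i,O)$). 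Thus more than half the agents satisfy $d(i,X) \le \alpha\, d(i,O)$, which means $\med(X) \le \alpha \cdot \med(O)$ if we order by $d(i,O)$; more carefully, the $(\lceil n/2\rceil)$-th smallest value of $d(i,X)$ is at most $\alpha$ times the $(\lceil n/2 \rceil)$-th smallest value of $d(i,O)$ because the set $X^*$ of size $>n/2$ contributes enough small $d(i,X)$ values. This already gives $\med(X) \le \alpha \cdot \med(O) \le (1+\alpha)\med(O)$, which is even stronger than claimed in this case.

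Second, suppose no majority winner exists. Then every alternative is the top choice of at most $\frac{n}{2}$ agents, and in particular the optimal alternative $O$ satisfies $|O^*| \le \frac{n}{2}$. I will show $\med(O) \ge \frac{1-\alpha}{1+\alpha}\cdot\frac{1}{?}$... more precisely, I want a lower bound on $\med(O)$. For any agent $i \notin O^*$, $O$ is not $i$'s top choice, so letting $W$ be $i$'s top choice, $\alpha$-decisiveness gives $d(i,W) \le \alpha\, d(i,O)$, and the triangle inequality together with $d(O,W) = 1$ gives $1 = d(O,W) \le d(i,O) + d(i,W) \le (1+\alpha) d(i,O)$, so $d(i,O) \ge \frac{1}{1+\alpha}$. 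Since at least $\frac{n}{2}$ agents lie outside $O^*$, at least half the agents have cost $\ge \frac{1}{1+\alpha}$ for $O$, hence $\med(O) \ge \frac{1}{1+\alpha}$ (taking the usual convention that the median is the $\lceil n/2 \rceil$-th smallest, and being slightly careful about whether the count is $\ge n/2$ or $> n/2$ — if necessary one gets $\med(O) \ge \frac{1}{1+\alpha}$ with the appropriate median convention, or loses only a single agent). On the other hand, for whatever alternative $A$ the mechanism outputs, $\med(A) \le 1$ since in the simplex setting $d(i,Y)\le 1$ for all $i$ and all $Y$. Therefore the expected distortion is at most $\mathbb{E}_{A}[\med(A)]/\med(O) \le 1 / \frac{1}{1+\alpha} = 1+\alpha$.

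Combining the two cases, the median distortion of any majority-consistent mechanism in the $\alpha$-decisive $(m-1)$-simplex setting is at most $1+\alpha$. The main obstacle I anticipate is the bookkeeping around the exact definition of the median when $n$ is even or when counts are exactly $n/2$ rather than strictly more: I need the "at least half the agents have $d(i,O)\ge \frac{1}{1+\alpha}$" statement to actually push the median up, and likewise in Case 1 I need "more than half the agents have small $d(i,X)$" to pull the median down. Using the convention $\med_{i}(\cdot)$ = the $\lceil n/2\rceil$-th order statistic (consistent with how $\med$ is used elsewhere in the paper), a set of strictly more than $n/2$ agents suffices in Case 1, and a set of at least $n/2$ agents must be handled carefully in Case 2 — I would note that if $O$ is optimal and no majority winner exists then $|O^*| \le \lfloor n/2 \rfloor$, so $|N \setminus O^*| \ge \lceil n/2 \rceil$, which is exactly enough to conclude $\med(O) \ge \frac{1}{1+\alpha}$. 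The rest is the short triangle-inequality and decisiveness computation already carried out above.
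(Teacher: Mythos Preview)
Your Case 2 (no majority winner) is correct and matches the paper's argument exactly. The gap is in Case 1. The inference ``more than half the agents satisfy $d(i,X)\le\alpha\,d(i,O)$, hence $\med(X)\le\alpha\,\med(O)$'' is not valid, and your ``more careful'' restatement does not fix it: knowing that the agents in $X^*$ have small $d(i,X)$ lets you bound the $\lceil n/2\rceil$-th smallest $d(i,X)$ by $\alpha$ times the $\lceil n/2\rceil$-th smallest value of $d(i,O)$ \emph{restricted to $i\in X^*$}, but that restricted order statistic is in general \emph{larger} than $\med(O)$, so the inequality chain breaks. Concretely, take $m=2$, $n=3$, $\alpha=\tfrac12$, with $X^*=\{1,2\}$ and $(d(1,X),d(1,O))=(0.4,0.8)$, $(d(2,X),d(2,O))=(0.45,0.9)$, $(d(3,X),d(3,O))=(0.9,0.1)$. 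This is a valid $\tfrac12$-decisive simplex instance, yet $\med(X)=0.45>0.4=\alpha\,\med(O)$.

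The repair is exactly what the paper does (and what you already set up in Case 2): in Case 1 you must use the simplex bound $d(i,O)\le 1$ to turn $d(i,X)\le\alpha\,d(i,O)$ into $d(i,X)\le\alpha$ for all $i\in X^*$, giving $\med(X)\le\alpha$ directly. Then, since $O\neq X$ forces $|O^*|<n/2$, your Case 2 computation applies verbatim to give $\med(O)\ge\frac{1}{1+\alpha}$. Combining these yields distortion at most $\alpha(1+\alpha)\le 1+\alpha$. Note this actually uses the simplex upper bound $d(i,Y)\le 1$, which your Case 1 argument never invokes; without it the pointwise comparison cannot be converted into a comparison of medians.
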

\begin{proof}
We begin by noting the following useful fact: for any alternative $Y$ with $|Y^*|\leq\frac{n}{2}$, it must be that $\med(Y) \geq \frac{1}{1+\alpha}$. Note that for the case when $n$ is even, $\med(Y)$ refers to the distance of the $(\frac{n}{2}+1)$-th furthest voter from $Y$. This fact is true because at most $\frac{n}{2}$ agents have $Y$ as their first preference, and the remaining agents have $d(i,Y) \geq \frac{1}{1+\alpha}$ by Lemma~\ref{lem:dec_agent_lb} and the fact that the distances between all alternatives equal 1 due to the simplex setting.

If there is no strict majority winner (i.e., all alternatives have at most $\frac{n}{2}$ agents choosing them as their top preference), then the above fact immediately implies the desired bound on median distortion. This is because any alternative $W$ has $\med(W) \leq 1$, since for all $i \in N$, we have that $d(i,W) \leq 1$ due to this being the simplex setting. Therefore, the distortion is always at most $1+\alpha$, since the median cost of any alternative lies between $\frac{1}{1+\alpha}$ and $1$.

Now we consider the case when there is a strict majority winner $W$, i.e., an alternative such that $|W^*| > \frac{n}{2}$. Let $X$ be the optimum alternative (i.e., the one minimizing $\med(X)$). If $X=W$, then the distortion is 1, so assume that $X\neq W$. Then, for every $i\in W^*$, we know that $d(i,W)\leq \alpha\cdot d(i,X)$ (since the distances are $\alpha$-decisive), and that $d(i,X)\leq 1$ (since this is the $(m-1)$-simplex setting). Therefore, it must be that $\med(W)\leq \alpha$. Due to the useful fact proven above, we know that $\med(X) \geq \frac{1}{1+\alpha}$, and the distortion is at most $\alpha(1+\alpha)\leq 1+\alpha$, as desired.
\end{proof}

%If there is a strict majority winner $W$, then our mechanism must select it since it satisfies the majority criterion. We claim that $W$ must be the optimal alternative. There are $> \frac{n}{2}$ agents that have $W$ as their top choice, and they have $(1+\alpha)d(i, W) \leq \alpha(d(i,W) + d(i,Y)) \leq \alpha \cdot d(W,Y) = \alpha$ for all alternatives $Y \neq W$. This implies that $\med(W) \leq \frac{\alpha}{1+\alpha}$. Thus, $\med(Y) \geq \frac{1}{1+\alpha} \geq \frac{\alpha}{1+\alpha} \geq \med(W)$. We conclude that $W$ is optimal.

%If, on the other hand, there is no strict majority winner, then any alternative $W$ has $\med(W) \leq 1$, since for all $i \in N$, we have that $d(i,W) \leq 1$. Furthermore, since no alternative is a majority winner, we know that $\med(Y) \geq \frac{1}{1+\alpha}$, for all alternatives $Y \in M$, by the same argument as above. Thus, the distortion is as most $1+\alpha$.

%To show that this bound is the best possible, consider a simple example with two alternatives, $X$ and $Y$. Suppose
%There exist examples in which alternatives are a factor of $1+\alpha$ away from optimal: let $X$ be the optimal alternative with $|X^*| \leq \frac{n}{2}$. For every agent $i \in X^*$, $d(i,X) = 0$ and $d(i,Y) = 1$ for all $Y \neq X$. For all other agents $i \in Z^*$ $d(i,X) = \frac{1}{1+\alpha}, d(i,Z) = \frac{\alpha}{1+\alpha},$ and $d(i,Y) = 1$ for all $Y \neq X, Z$. Thus, $\med(X) = \frac{1}{1+\alpha}$ and $\med(Y) = 1$ for all $Y \neq X$.
%\end{proof}

\section{Conclusion}

We analyzed the distortion of randomized social choice mechanisms in a setting where agent costs form a metric space. In cases where randomized mechanisms are appropriate, such as when the choice will be repeated many times, or when the probability $p(Y)$ can be thought of as the amount of power that candidate $Y$ gets, with the total amount of power summing to 1, then a very small amount of information is necessary to form mechanisms with very small distortion. The randomized mechanisms we consider, such as  randomized dictatorship and proportional to squares, require only the first preference of each agent (and do not require the agent costs to be known) to achieve distortion better than any known deterministic mechanism, despite the fact that the best deterministic mechanisms require the full preference profile. Thus, randomized mechanisms can perform better with less information. We also considered special cases of metrics, such as when agents have a strong preference towards their first choice over the second, when agent preferences are $1$-Euclidean, and when all alternatives are completely dissimilar, and we were able to achieve better distortion bounds using randomized mechanisms. This was true even for the more ``egalitarian" median objective, in which we were able to provide a probability distribution based on the majority graph with distortion better than any deterministic mechanism.

Some open questions still remain. While we were able to show that proportional to squares is an optimal mechanism for $m=2$ alternatives in the sum setting, our best known mechanism for arbitrary $m$ is randomized dictatorship, which has a distortion arbitrarily close to 3 in the worst case. We suspect there may exist a generalization of proportional to squares that is able to achieve a distortion of 2, but it is likely significantly more complex and may require the full preference profile instead of agents' top preferences.

More generally, this paper studies how well algorithms and mechanisms which only have access to limited ordinal information, instead of the ground truth, can compete with truly omniscient algorithms. These questions are larger than just social choice, and apply to other settings as well, for example matchings, as described in the Introduction. At least for metric settings, it seems that knowing only ordinal information is often enough; there is no need to elicit complex numerical information. Looking at other utility structures in addition to metric spaces would also make sense, such as specific metric spaces which model particular applications (e.g., doubling metrics), or algorithms which know limited numerical information (e.g., the order of magnitude of the agent utilities), but must compete with omniscient mechanisms. Finally, it would be interesting to analyze the normative properties of mechanisms with small distortion: it may be possible to characterize the entire space of mechanisms which have, e.g., distortion at most 3 and obey certain desirable axiomatic properties.

\section*{Acknowledgements}
We owe great thanks to Edith Elkind for many interesting and informative discussions on the topic of social choice. This work was supported in part by NSF awards CCF-1101495, CNS-1218374, and CCF-1527497.

\end{document}